\newcommand{\labeltext}[2]{%
  \@bsphack
  \MakeLinkTarget*{#1}%
  \def\@currentlabel{#1}{\label{#2}}%
  \@esphack
}
\title{A Spectral-Grassmann Wasserstein metric for operator representations of dynamical systems}
\author{Thibaut Germain\\
   CMAP, Ecole polytechnique\\
   \texttt{thibaut.germain@polytechnique.edu}\\
   \And 
   Rémi Flamary\\
   CMAP, Ecole polytechnique\\
   \texttt{remi.flamary@polytechnique.edu}\\
   \AND 
   Vladimir R. Kostic\\
   Istituto Italiano di Tecnologia\\
   University of Novi Sad\\
   \texttt{vladimir.kostic@iit.it}\\
   \And
   Karim Lounici\\
   CMAP, Ecole polytechnique\\
   \texttt{karim.lounici@polytechnique.edu}\\
   }
\def\eqref#1{equation~\ref{#1}}
\def\1{\bm{1}}
\DeclareMathAlphabet{\mathsfit}{\encodingdefault}{\sfdefault}{m}{sl}
\SetMathAlphabet{\mathsfit}{bold}{\encodingdefault}{\sfdefault}{bx}{n}
\newcommand{\R}{\mathbb{R}}
\newcommand{\reg}{\gamma}
\DeclareMathOperator*{\dom}{\ensuremath{\text{\rm dom}}}
\newtheorem{theorem}{Theorem}
\newtheorem{lemma}{Lemma}
\newcommand{\scalarp}[1]{{\langle #1\rangle}}
\providecommand{\norm}[1]{\lVert#1\rVert}
\providecommand{\SVDr}[1]{[\![#1]\!]_r}
\providecommand{\SVDrr}[2]{[\![#1]\!]_{#2}}
\providecommand{\abs}[1]{\lvert#1\rvert}
\newcommand{\C}{\mathbb C}
\newcommand{\N}{\mathbb N}
\newcommand{\levec}{\widehat{u}}
\newcommand{\revec}{\widehat{v}}
\newcommand{\refun}{\psi}
\newcommand{\erefun}{\widehat{\psi}}
\newcommand{\lefun}{\xi}
\newcommand{\elefun}{\widehat{\xi}}
\newcommand{\eval}{\lambda}
\newcommand{\eeval}{\widehat{\eval}}
\newcommand{\gap}{\ensuremath{\text{\rm gap}}}
\newcommand{\fH}{\phi}
\newcommand{\fG}{\psi}
\newcommand{\HSr}{{\rm{B}}_r({\RKHS})}
\newcommand{\Cx}{C_x}
\newcommand{\Creg}{C_\reg}
\newcommand{\Cxy}{C_{xy}}
\newcommand{\Kx}{K}
\newcommand{\Ky}{L}
\newcommand{\Kyx}{M}
\newcommand{\im}{\pi} %
\newcommand{\Koop}[1]{{A_{\im_{#1}}}}  %
\newcommand{\HTO}[1]{T_{#1}}
\newcommand{\ETO}[1]{\widehat{T}_{#1}}
\newcommand{\HKoop}{T}
\newcommand{\HSProj}[2]{{Q^{#1}_{#2}}}
\newcommand{\TS}{S_\im}  %
\newcommand{\ES}{\widehat{S}} %
\newcommand{\TZ}{Z_{\im}}  %
\newcommand{\Estim}{G}  %
\newcommand{\ECx}{\widehat{C}_x } %
\newcommand{\ECy}{\widehat{D}} %
\newcommand{\ECxy}{\widehat{C}_{xy}}  %
\newcommand{\ECreg}{\widehat{C}_\reg}
\newcommand{\EZ}{\widehat{Z}} %
\newcommand{\RKoop}{G_\reg}  %
\newcommand{\EEstim}{\widehat{G}}  
\newcommand{\EE}{\ensuremath{\mathbb E}}
\newcommand{\Data}{\mathcal{D}}
\newcommand{\X}{\mcX} %
\newcommand{\RKHS}{\mcH} %
\newcommand{\Lii}{\mcL^2_\im(\X)}   %
\providecommand{\norm}[1]{\lVert#1\rVert}
\newcommand{\HS}[1]{{\rm{HS}}\left(#1\right)} %
\newcommand{\Risk}{\mathcal{R}} %
\newcommand{\ExRisk}{\mathcal{E}_{\rm HS}} %
\newcommand{\IrRisk}{\mathcal{R}_0} %
\newcommand{\ERisk}{\widehat{\mathcal{R}}} %
\newcommand{\hnorm}[1]{\lVert#1\rVert_{\rm{HS}}} %
\newcommand{\rpar}{\alpha}
\newcommand{\spar}{\beta}
\DeclareMathOperator*{\range}{\ensuremath{\text{\rm Im}}}
\newcommand{\mXt}[1]{\mu_{#1}}
\newcommand{\spF}{\mathcal{F}}
\newcommand{\spX}{\mathcal{X}}
\newcommand{\TO}{A}
\newcommand{\IG}{L}
\begin{document}

\maketitle

\begin{abstract}
The geometry of dynamical systems estimated from trajectory data is a major challenge for machine learning applications. Koopman and transfer operators provide a linear representation of nonlinear dynamics through their spectral decomposition, offering a natural framework for comparison. We propose a novel approach representing each system as a distribution of its joint operator eigenvalues and spectral projectors and defining a metric between systems leveraging optimal transport. The proposed metric is invariant to the sampling frequency of trajectories. It is also computationally efficient, supported by finite-sample convergence guarantees, and enables the computation of Fréchet means, providing interpolation between dynamical systems. Experiments on simulated and real-world datasets show that our approach consistently outperforms standard operator-based distances in machine learning applications, including dimensionality reduction and classification, and provides meaningful interpolation between dynamical systems.
\end{abstract}

\section{Introduction}

Dynamical systems are widely used across scientific and engineering disciplines to model state variables' evolution over time \citep{lasota2013chaos}. Nonlinear ordinary or partial differential equations typically govern these systems and may incorporate stochastic components \citep{meyn2012markov}. However, in many practical situations, analytical models are unavailable or intractable, motivating the use of data-driven approaches to infer the underlying dynamics from sampled trajectories. In this context, Koopman and transfer operator regressions have emerged as a powerful framework for learning and interpreting dynamical systems from data \citep{brunton2021modern}. Rather than directly modeling the evolution of state variables, these operators advance observables (scalar functions defined on the state space) by mapping each to its expected future value conditioned on the current state. Crucially, these operators are linear even when the underlying systems are not linear. Under suitable conditions, they admit a spectral decomposition that provides insight into the system's long-term behavior, stability, and modal structure \citep{mauroy2020koopman}. These properties have made the operator-centric framework particularly appealing for both theoretical analysis and practical applications across various domains, including chemistry for molecular kinetics explainability \citep{wu2017variational}, robotics for control \citep{bruder2020data}, and fluid dynamics for prediction \citep{lange2021fourier}.

{
{\bf Koopman and transfer operators for dynamical systems.~}
From a learning standpoint, Koopman and transfer operators provide a compact and structured representation of dynamical systems, making them well-suited for machine learning applications requiring system comparison, such as time series classification \citep{surana2020koopman} and dynamical graph clustering \citep{klus2023koopman}. However, in order to leverage these representations in standard statistical and machine learning pipelines, one must first define a meaningful metric between them. Unfortunately, despite recent advances in operator estimation \citep{colbrook2023residual,kostic2023sharp,Kostic-Sub2024, bevanda2023koopman}, the development of similarity measures between operator representations of dynamical systems remains relatively underexplored despite the growing need for interpretable metrics on dynamical systems in machine learning applications \citep{ishikawa2018metric}.

}

{\bf Comparing dynamical systems.} We succinctly review existing similarity measures on dynamical systems; a detailed account is given in \Cref{appendix: related_work}. The case of (stochastic) linear dynamical systems (LDSs) and linear state-space models was first addressed in the literature \citep{afsari2014distances}. While early metrics are theoretically sound and leverage the manifold structure of LDS spaces, they suffer from high computational cost, making them impractical in most machine learning settings \citep{hanzon1982riemannian,gray2009probability}. Originally designed for ARMA models, the Martin pseudo-metric \citep{martin2002metric} offers a practical alternative and has later been extended to general LDS spaces and inspired kernel-based variants \citep{chaudhry2013initial}. These measures have been generalized to nonlinear systems through the Koopman/transfer operator framework \citep{fujii2017koopman,ishikawa2018metric}. More recent work considers topological conjugacy, where similarities can be defined via alignment methods \citep{ostrow2023beyond,glaz2025efficient} or Optimal Transport (OT) between operator spectra \citep{redman2024identifying}. A related line of research studies Wasserstein-type metrics on functional spaces such as \citet{antonini2021geometry}, introducing OT between measures derived from the eigenvalues of normal operators.\\
The above approaches face key limitations. Norm-based measures and the Martin pseudo-metric are noise-sensitive and lack interpretability. OT-based similarities improve interpretability by comparing spectral geometry, but they are restricted to self-adjoint operators and define pseudo-metrics rather than metrics. As a result, no existing method combines theoretical soundness, robustness, and computational efficiency, and defining a principled metric for dynamical systems remains an open challenge.

{\bf Contributions.} %
   In \Cref{section: method} we introduce a \textbf{novel representation of transfer operators as distributions over eigenvalues and associated eigenspaces on the Grassmann manifold}. Building on this, we define a new metric between dynamical systems via optimal transport between these distributions. We prove that this metric is theoretically principled, computationally efficient, and robust under data-driven operator estimation. We further establish spectral learning rates under weaker assumptions for reduced-rank Koopman estimators, advancing the state of the art and providing finite-sample convergence guarantees for our metric. Leveraging this geometry, we propose an algorithm to compute Fréchet means of dynamical systems. Finally, in \Cref{section: experiments}, we empirically demonstrate the advantages of our approach over existing metrics and apply it to machine learning tasks and system interpolation.

\section{Background}
\label{sec:background}

{\bf Linear evolution operators.~}
Let $(X_{t})_{t\in\mathbb{T}}$ be the flow in some state space $\X$ whose governing laws are temporally invariant, where the time index $t$ can be either discrete ($\mathbb{T}=\N_0$) or continuous ($\mathbb{T}=[0,+\infty)$). While the flows of many important dynamical systems are nonlinear and possibly stochastic, under quite general assumptions they admit \emph{linear operator representations} on a suitably chosen space of real-valued functions $\spF \subset \R^{\spX}$, henceforth referred to as observable space. Namely, letting $t \in \mathbb{T}$, the \emph{transfer operator}, also known as \emph{Koopman operator} for deterministic systems, {$\TO_{t}\colon\spF\to\spF$ evolves an observable $f: \spX \rightarrow \mathbb{R}$} for time $t$ via conditional expectation
\begin{equation}\label{eq: Koopman operator}
[\TO_{t}( f )](x) 
:= \EE[ f(X_{t})\,\vert X_{0} = x],\;x\in\spX. 
\end{equation}
Clearly, since $\TO_t \TO_s = \TO_{t+s}$, in the discrete-time setting the process can be studied only through the transfer operator $\TO:=\TO_{1}$ of one unit of time, typically a second. On the other hand, when time is continuous, the process is characterized by the infinitesimal generator of the semigroup $(\TO_{t})_{t\geq 0}$, defined as 
$
\IG := \lim_{t\to 0^+} (\TO_{t} - \Id)/t
$
that is a differential operator with domain in $\spF$ that encodes the equations of motion and generate dynamics as $\TO_t = e^{\IG t}$, see~\cite{Lasota1994,ross1995stochastic}. 

{\bf Spectral decomposition.~} The utility of transfer operator representations stems from its linearity on a suitably chosen $\spF$ that is {\it invariant} space under the action of $\TO_{t}$, that is $\TO_{t} [\spF] \subseteq \spF$ for all $t$ (a property that we tacitly assumed above), and \textit{rich enough} to represent the flow of the process, i.e., it contains observables from which we can reconstruct all the relevant information of the state  (e.g. in the case of a stochastic system distribution $\mXt{t}$ at any time $t$ ). Namely, using the spectral theory of linear operators \cite{kato2013perturbation}, under suitable assumptions, one can spectrally decompose generator $\IG{=}\sum_{j\in J} (\lambda_j \, P_j +N_j) {+} P_c \IG$ into distinct complex scalars $\lambda_j{\in}\C$, called eigenvalues, forming point-spectrum and mutually commuting Riesz spectral projectors $P_j$ that satisfy satisfy equations $\IG P_j {=} \lambda_j P_j$ and $P_c {=} I {-} \sum_{j}P_j$, $P_J$ being of finite rank $m_j$ (geometric multiplicity), $N_j$ being nilpotent, and $j{\in} J$ being countably many. Assuming for simplicity that $\spF$ is a separable Hilbert space and $\IG$ is a non-defective operator with purely discrete spectrum, e.g. stable diffusion
processes, see~\cite{ross1995stochastic}, we have that
$\IG {= }\textstyle{\sum_{j \in \mathbb{N}}} \lambda_j \, g_j \otimes_{\spF} f_j$,
with $\IG f_j {=} \lambda_j f_j$, $\IG^* g_j {=}\overline{\lambda_j} g_j$, and $\langle f_j, g_j \rangle_{\spF} {=} \delta_{i,j}$,
where $(\lambda_j,f_j,g_j)_{j\in\mdN}$ are eigen-triplets consisting of an eigenvalue, left and right eigenfunction, respectively. This, in turn, allows one to decouple the evolution of an arbitrary observable $f{\in}\spF$
\begin{equation}\label{eq: modal decomposition}
\mdE[f(X_t) \ | \ X_0 = x_0] = [\TO_t f](x)=\textstyle{\sum_{j \in \mdN}} e^{\lambda_j t} \innerp{f_j}{g_j}_{\spF} f_j(x_0) = \textstyle{\sum_{j \in \mdN}} e^{\tau_j t} e^{ i 2\pi \omega_j t} m_j^f (x_0),
\end{equation}
into modes $m_j^f=\innerp{f_j}{g_j}_{\spF} f_j\colon\spX\to\R$ that evolve as scalar oscillators at timescales given by reciprocals of $\tau_j = \Re(\lambda_j)$ and frequencies $\omega_j = \Im(\lambda_j)\,/\,2\pi$ in Hz (assuming time in seconds). 

{\bf Learning transfer operators.~}
In machine learning applications, dynamical systems are only observed, and neither $\TO$ nor its domain, such as the space of square integrable functions w.r.t. the equilibrium measure, is known, providing a key challenge to learn them from data. 
The most popular algorithms~\citep{brunton2021modern} aim to learn the action of $\TO\colon \spF\,{\to}\,\spF$ on a predefined, possibly infinite dimensional, Reproducing Kernel Hilbert Space (RKHS), resulting in estimating the {\em restriction} of $\TO$ on $\RKHS\subseteq\spF$ by projection, that is $P_{\RKHS}\TO_{\vert_{\RKHS}}\colon\RKHS\to\RKHS$, typically via empirical risk minimization,
~\cite{kostic2022learning}. When $\RKHS$ is given by a universal reproducing kernel~\cite{Steinwart2008}, meaning it is dense in $\spF$, such techniques have strong spectral estimation guarantees \cite{kostic2023sharp}, can forecast well the states \cite{bevanda2023koopman,Alexander2020}, and evolve distributions of stochastic processes via kernel mean embeddings \cite{kostic2024consistent}. As an alternative, finite-dimensional $\RKHS$ spaces can be used with sieve methods \cite{kutz2016dynamic} or be learned from data in the form of rich neural representations~\cite{Liu2024}, that can be also trained to minimize the projection error $\Vert P_{\RKHS}^\bot\TO_{\vert_{\RKHS}}\Vert_{\spF\to\spF}$~\cite{Kostic-ICLR2024}. In these settings, a major limitation of the existing statistical learning guarantees is assuming well-specifiedness, i.e., the existence of an exact RKHS representation of $\TO$. A more realistic learning scenario requires that only the most relevant spectral part of $\TO$ lives in a suitable universal RKHS space.

{\bf Discrete optimal transport.}~
Optimal Transport (OT) is a well-defined framework to compare probability distributions, with many applications in machine learning~\citet{peyre2019computational}. In discrete OT, one seeks a transport plan mapping samples from a source distribution to those of a target distribution while minimizing a transportation cost.
Formally, consider $\mcZ_S = \{z_i \in \mcZ \ | \ i \in [k_S]\}$ and $\mcZ_T =
\{z'_i \in \mcZ \ | \ i \in [k_T]\}$ as the sets of source and target samples in
a space $\mcZ$. We associate to these sets the probability distributions $\mu_S
= \sum_{i \in [k_S]}a_i \delta_{z_i}$ and $\mu_T = \sum_{i \in [k_T]}
b_i\delta_{z'_i}$  with $(\mba,\mbb) \in \Delta^{k_S} \times \Delta^{k_T}$ and
$\Delta^n = \{ \mbp \in \mdR_+^n\ | \ \sum_{i \in [n]}p_i = 1\}$ the $n$-simplex. Let
$\mbC \in \mdR_+^{k_S \times k_T}$ be the cost matrix with $C_{ij} =
c(z_i,z'_j)$ being the transport cost between $z_i$ and $z'_j$ given by the cost
function $c$. The Monge-Kantorovich problem aims at identifying a coupling
matrix, also denoted as OT plan $\mbP^* \in \mdR_+^{k_S \times k_T}$, that is a solution of the
constrained linear problem:
\begin{equation}
   \label{eq: OT problem}
   \min\limits_{\mbP \in \Pi(\mu_S,\mu_T)} \innerp{\mbC}{\mbP}_F \quad \text{s.t} \quad \Pi(\mu_S,\mu_T) = \{ \mbP \in \mdR_+^{k_S \times k_T} \ | \ \mbP\mb1 = \mba,~ \mbP^\intercal\mb1 = \mbb \}~,
\end{equation}
where $\Pi(\mba,\mbb)$ is the set of joint-distributions over $\mcZ_S \times
\mcZ_T$ with marginals $\mba$ and $\mbb$. In what follows, we denote
$L_c(\mu_S,\mu_T)$ the application returning the optimal value of problem (\ref{eq:
OT problem}) where $c$ indicates the cost function. A fundamental property of OT
is that, under suitable conditions on the cost function, the Wasserstein
distance defined as $ W_p  (\mu,\nu) \triangleq (L_{d^p}(\mu,\nu))^\frac{1}{p}$ is a metric on the
space of probability measures \citet[Theorem 6.18]{villani2008optimal}.

\section{Spectral-Grassmann Optimal Transport (SGOT)}
\label{section: method}

\textbf{Problem setting and assumptions.}
Machine learning tasks on (stochastic) dynamical systems, such as comparing trajectories, identifying regimes, or clustering dynamics, require a discriminative and computationally efficient notion of distance between observed processes. We address this by representing each system through its associated Koopman/transfer operator and then introducing the SGOT metric to compare them. To that end, let us formalize the problem setting and main assumptions.\\
\textbf{\labeltext{(A1)}{enum:assumption1}}\hspace{3mm}\textbf{(A1, Dynamical
systems functional spaces and sampling)} Consider $N \in \mdN^*$
time homogeneous, Markovian dynamical systems defined on a common state space $\mcX$ and characterized by their generators $\IG_{k}\colon\dom(\IG_{k})\subset\spF_{k}{\to}\spF_{k}$ defined on the respective spaces $\spF_{k}$ of observables $\X{\to}\R$, $k{\in}[N]$. For every $k{\in}[N]$, let $\Data_{k}{=}\{(x_i^{k},y_i^{k})\}_{i\in[n_k]}$ be a dataset of observations of the of the $k$-th system, consisting of consecutive states separated by time-lag $\Delta t_k$. Notably, in the case of a single trajectory $y^k_i = x_{i+1}^k$.\\
Since data-driven methods can distinguish between systems only up to the temporal resolution at which the observations are made~\cite{zayed2018advances}, recalling \eqref{eq: modal decomposition}, the systems differing in spectral
components beyond the observable range of timescales $1/\tau_j$ and frequencies $\omega_j$ are undistinguishable from measurements. Therefore, we focus below on spectral projections of dynamics that can be learned from finite data.\\
\textbf{\labeltext{(A2)}{enum:assumption2}}\hspace{3mm}\textbf{(A2, Low rank)} For every $k{\in}[N]$ there exists $r_k{\in}\mdN$ such that $r_k$ eigenvalues of $\IG_{k}$ closest to the origin are separated from the rest of the spectrum, and let $P_{\leq r_k}\colon\spF_{k}{\to}\spF_{k}$ denote the corresponding spectral projector. \\
Recalling the case of dynamical systems sampled at equilibrium, i.e. $\spF_k=\mcL^2_{\pi_k}(\mcX)$ with $\pi_k$ being the invariant measure of the $k$-th system, a central conceptual difficulty in introducing distance between systems is that transfer operators for different systems naturally act on different spaces, and therefore cannot be compared directly. To resolve this, we restrict each operator to a common reproducing kernel Hilbert space $\RKHS$ that is included in the domain of all the transfer operators.\\ 
\textbf{\labeltext{(A3)}{enum:assumption3}}\hspace{3mm}\textbf{(A3, Common functional space)} Let $\mcH$ be a  separable RKHS associated with kernel $\kappa$, such that for all $k{\in}[N]$ it holds $\range(P_{\leq r_k}\IG_k)\subset \mcH \subset \mcF_{k}$. Hence, there exists representation $\HTO{k}{=}e^{(P_{\leq r_k}\IG_k)_{\vert_{\RKHS}}}\colon\RKHS{\to}\RKHS$ with spectral decomposition $\HTO{k}{=}\sum_{j\in[\ell_k]} e^{\lambda_j^{k}} \, \HSProj{k}{j}$ where $\ell_k$ is the number of distinct eigenvalues. %
\\
Assuming the sufficient richness of the RKHS space, the shared domain above ensures the operators are mathematically comparable via their restrictions. That this assumption is reasonable, it suffices to note that for typical Langevin dynamics, a universal Gaussian RBF RKHS with properly chosen landscale parameter contains a finite number of leading eigenfunctions of generators $\IG_{k}$ defined on $\mcL^2_{\pi_i}(\mcX)$ spaces weighted by Boltzmann distributions $\pi_k$. Furthermore, one can formally build a finite-dimensional space $\RKHS$ by choosing exactly the basis of such a generator's eigenfunctions; the complexity of the problem is then transferred to learning $\RKHS$. Beyond this stochastic case, one can similarly work in other domains, see e.g.~\cite{colbrook2025rigged,Alexander2020,bevanda2023koopman}.

{\bf Spectral Grassmanian Wasserstein metric.}~Since the spectral decomposition of a non-defective operator $\HTO{k}$ into its eigenvalues and spectral projectors is uniquely defined up to a permutation, any meaningful comparison approach based on operators' spectral decomposition must be invariant to permutations and change of basis in which spectral projectors are expressed. While discrete optimal transport naturally provides invariance to permutations through the minimizing coupling matrix \cref{eq: OT problem}, we need to design a ground metric that takes into account both spectral and subspace aspects to obtain a true OT metric. This is done below, where we define a Wassertein metric on the set of non-defective operators (complete proof in \Cref{appendix: proof sgot}).

\begin{thm}\label{thm:main_wasserstein_metric}
Let $\mcH$ be a separable $\mdC$-Hilbert space and $\mcS_r(\mcH)$ the set of non-defective operators with rank at most $r\in\mcD$. Let $(\mcG, d_\mcG)$ be the Grassmanian manifold of the space of Hilbert-Schmidt operators on $\RKHS$. Given $p{\in}\mdN^*$ and $\eta {\in} (0,1)$, let $\mu{\colon} S_r(\mcH){\to}\mcP_p(\mdC \times \mcG)$ and  $d_\eta{\colon} (\mdC \times \mcG)^2 {\to}\mdR_+$ be given by
\begin{equation}\label{eq: baseline metric}
\mu(\HKoop) {\triangleq} \textstyle{\sum_{j \in [\ell]} }\frac{m_j}{m_{tot}}\delta_{(\lambda_j,\mcV_j)}\quad\text{ and }\quad d_\eta[(\lambda',\mcV'),(\lambda', \mcV')]{\triangleq}\eta |\lambda-\lambda'| {+} (1{-}\eta) \,d_\mcG(\mcV,\mcV'),
\end{equation}
with $|\cdot|$ applied on polar coordinates $\lambda,\lambda'$, $m_{tot} = \sum_{i \in [\ell]}m_i$, $\mcV_j$ the $m_j$-dimensional vector space in ${\rm HS}(\RKHS,\RKHS)$ spanned by the rank one operators of the right/left eigenfunctions associated with the eigenvalue $e^{\lambda_j}$ of $\HKoop$ (same notation for $\HKoop'$). Then, $(\mcS_r(\mcH),d_{\mcS})$ is a metric space, where $d_{\mcS}{\colon}\mcS_r(\mcH){\to}\mdR_+$ is given by
\begin{equation}\label{eq: spectral metric on operator} 
      d_{\mcS}(\HKoop,\HKoop') =  W_{d_\eta,p}(\mu(\HKoop),\mu(\HKoop')).
   \end{equation}
\end{thm}

First, recalling \eqref{eq: modal decomposition} and \ref{enum:assumption1}, note that while typically in data-driven methods datasets are sampled at some frequency $\omega^{ref}_{k} = 1/\Delta t_k$ to estimate eigenvalues $e^{\lambda^{k}_i \Delta t_k}$ of transfer operators $\TO_{k}^{\Delta t_k}$, we build a metric using the difference in the generator eigenvalues. This is to compare Koopman modes' eigenvalues as physical quantities, since for the $k$-th system the observed time-scales are $\tau_j^k / \omega^{ref}_{k}$ and the oscillating frequencies $\omega_j^k / \omega^{ref}_{k}$. So, by re-normalizing eigenvalues, we can compare systems observed at different time-scales in the universal time units. 
Further, we remark that assuming non-defective operators is not a major bottleneck, since \Cref{thm:main_wasserstein_metric} can be extended to the space of general linear operators with rank at most $r$ by leveraging the Dunford-Jordan decomposition \citep{dunford1988linear}. In this case, the cost metric in $d_\eta$ compares the spectrum and subspaces of Jordan blocks.

{\bf Metric computation.}~In order to evaluate the SGOT metric, one needs to compute the cost matrix (see \cref{sec:background}), i.e., $d_\eta$  for each pair of spectrals. Following \ref{enum:assumption1}-\ref{enum:assumption3}, let $\widehat{\HKoop}$ be an operator estimated from samples $\{(x_i,y_i)\}_{i \in [n]}$ with a kernel based method. Suppose that $\widehat{\HKoop}$ admits $l$ eigenvalues, each with multiplicity $m_i$. Let ${\bs\beta_i, \bs\alpha_i} \in (\mdC^{n \times l_i})^2$ be the control parameters of the left/right eigenfunctions related to the $i^{\text{th}}$ eigenvalue and preprocessed to form an orthonormal basis. Let $\widehat{\HKoop}'$ be another estimated operator, and $\mbM_{\bs\epsilon} \triangleq \{k(\epsilon_i,\epsilon'_j)\}_{(i,j) \in [n]\times[n']}$ with $\bs\epsilon{\in}\{\mbx,\mby\}$, be the cross-kernel matrices. For $p=1$, the cost matrix $\mbC \in \mdR_+^{l \times l'}$ is given by:
\begin{equation}
C_{i,j} = \eta |\lambda_i-\lambda'_j| + (1-\eta)(m_i + m_j - 2 \Tr((\bs{\beta}_i^* \mathbf{M}_y\bs{\beta}_j)^*(\bs{\alpha}_i^*\mathbf{M}_x \bs{\alpha}_j)))^{\frac{1}{2}}~.
\end{equation}
With the rank $r\geq \max (l,l')$, the time complexity of $d_\mcS$ is in $O(n^2 r^2+r^3log(r))$ respectively due to the cost matrix computation and the OT solver (that is negligible for small $r$).
Consequently, $d_\mcS$ and the kernel metric computation are asymptotically equivalent, overcoming the usual computational drawbacks of OT-based methods relative to kernel ones.  If needed, both metrics can further benefit from standard kernel scaling techniques~\cite{Meanti2023}.

{\bf Statistical guarantees.}~In the following, we show how using RRR estimators yields unbiased estimation of the SGOT. To that end, consider $\ETO{k}=(\ECx^k{+}\gamma I)^{-\frac{1}{2}}\SVDrr{(\ECx^k {+}\gamma I)^{-\frac{1}{2}}\ECxy^k}{r_k}$, where $\ECx^k {=} \tfrac{1}{n_k}\sum_{i\in[n_k]} \kappa_{x_i^k}\otimes \kappa_{x_i^k}$, $\ECxy^k {=} \tfrac{1}{n_k}\sum_{i\in[n_k]} \kappa_{x^k_i}\otimes \kappa_{y^k_i}$, $\gamma>0$ and $\SVDr{\cdot}$ denoting best rank-$r$ approximation. As discussed above, one can efficiently compute $d_{\mcS}(\ETO{1},\ETO{2})$ so that the following holds.

\begin{restatable}{thm}{prpstatbound}
\label{thm:stat_bound}
Let \ref{enum:assumption1}-\ref{enum:assumption3} hold with $k\in[2]$, $\spF_k{=}\mcL^2_{\pi_k}(\spX)$ and $\kappa(x,x)<\infty$ a.s. for $x\sim\pi_k$. Let $\EE[\ECx^k]=\Cx^k$ and assume that for some $\alpha{\in}[1,2]$ and $\beta{\in}[0,1]$ it holds that $\Vert|[(\Cx^k)^\dagger]^{\frac{\alpha-1}{2}}\HTO{k}\Vert_{\RKHS{\to}\RKHS}{<}\infty$ and $\eval_i(\Cx^k)\,{\leq}\,\lesssim\,i^{-1/\spar}$ for $i\in\N$. Given $\delta \in (0,1)$, if $n$ is large enough and $\lambda_{r_k}{\lesssim}-\frac{\alpha\log n}{2(\alpha+\beta)}$, then w.p.a.l. $1 {-}\delta$ in the i.i.d. draw of samples $\Data_1$ and $\Data_2$ it holds
$| d_\mcS(\ETO{1},\ETO{2}) - d_\mcS(\HTO{1},\HTO{2}) | \lesssim  n^{-\frac{\rpar -1 }{2(\rpar+\spar)}}\ln(2\delta^{-1})$. 
\end{restatable}
\begin{proof}[Sketch of Proof]
To obtain this result, we needed to overcome the overly strong assumption of well-specifiedness of $\RKHS$ made in \cite{kostic2023sharp}, which significantly reduces the applicability of those bounds to estimate the distance between true generators $L_k$ with high probability. By carefully treating the approximation errors originating from rank reductions, $\widetilde{T}_k$ being population version of $\ETO{k}$, we obtain $\| \widetilde{T}_k  - \HTO{k}  \| \lesssim \reg^{\frac{\rpar-1}{2}}{+}e^{\lambda_{r_k}}$ under realistic assumption \ref{enum:assumption3}. Furthermore, we derive an upper bound on the operator norm $\| \widetilde{T}_k  - \ETO{k}\|_{\RKHS\to\RKHS} \lesssim \sqrt{\reg^{-\spar-1} n^{-1}} \log(\delta^{-1}) $ w.p.a.l. $1-\delta$. Balancing the two terms gives the bound on $\| \widetilde{T}_k  - \HTO{k}  \|$. Next, we apply standard polar analysis and Davis-Kahan perturbation analysis to derive the bound on $d_\eta(\widetilde{T}_k,\HTO{k})$. Finally, the stability property of the Wasserstein distance gives the final bound. Full proof is available in \cref{appendix: stat_bound}.
\end{proof}

{\bf Spectral Grassmann OT barycenter, parametric model and optimization.}
Computation of barycenters is fundamental for many unsupervised methods; it is known as the \emph{Fréchet mean problem} in metric spaces. It consists in identifying an element that minimizes a weighted sum of distances to the observations. Formally, given the importance weights $\bs\gamma {\in} \Delta^N$, assuming~\ref{enum:assumption1}-\ref{enum:assumption3}, and $p{=}2$ in \Cref{thm:main_wasserstein_metric}, we aim to solve:
\begin{equation}
   \label{eq:frechet_mean_problem}
   \argmin_{T \in \mcS_r(\mcH)} \sum_{k \in [N]} \gamma_i d_\mcS(\HKoop,\HTO{k})^2, 
\end{equation} 
By construction of $d_\mcS$, problem~\ref{eq:frechet_mean_problem} corresponds to a \emph{free-support Wassertein barycenter} estimation problem which aims at optimizing the support of the atoms parametrizing the barycenter, in our case, its spectral decomposition. State-of-the-art algorithms typically rely on a coordinate descent scheme, alternating between transport plan computation and measure optimization \citep{cuturi2014fast,claici2018stochastic}. \\
Whenever the RKHS $\mcH$ is infinite dimensional, the Fréchet mean problem (\cref{eq:frechet_mean_problem}) is intractable. 
So we restrain the optimization over a set of parametrized operators defined such that for any $\bs\theta \triangleq (\bs\lambda,\bs\alpha,\bs\beta,\mbx)$: 
\begin{equation}
   \label{eq: parametrized operator}
   T_{\bs\theta} : h \in \mcH \mapsto  \textstyle{\sum_{i \in [r]}} \lambda_i \innerp{ \kappa_{\mbx}\bs\alpha_i}{h}_\mcH \kappa_{\mbx}\bs\beta_i \in \mcH
\end{equation}
where $\bs\lambda \in \mdC^r$, $\mbx \in \mcX^n$ are state space control points, and $\bs\alpha,\bs\beta \in \mdC^{n \times r}$ control parameters acting on the representer functions $\kappa_\mbx = \{\kappa(.,x_j)\}_{j \in [n]}$ with $\kappa$ the kernel of $\mcH$, i.e. $\kappa_{\mbx}\bs\alpha_i \triangleq \sum_{j \in [n]}\kappa_{x_j}\alpha_{ji}$. While these operators are compact with rank at most $r$, further constraints on the control points and parameters are required to ensure a spectral decomposition (see \cref{eq: modal decomposition}). Together with the definition of discrete optimal transport (see \Cref{sec:background}), it leads to the constrained optimization problem:
\begin{equation}
   \label{eq: constrained spectral barycenter problem}
   \argmin\limits_{\bs\theta, \mbP} \sum_{i \in [N]} \gamma_i \innerp{\mbC_i(\bs\theta)}{\mbP_i}_F \quad \textrm{s.t.} \quad \left\{
   \begin{array}{ll}
    \bs \alpha^* \mbK \bs\beta = \mbI & \mbK = \{\kappa(x_i,x_j)\}_{(i,j) \in [n]^2} \\
    \bs\beta_j^* \mbK \bs\beta_j = 1,~ \forall j \in [r] & \mbP_i \in \Pi(\mu(T_{\bs{\theta}}),\mu(T_i)),~\forall i \in [N]
   \end{array}\right.
\end{equation}
where $\mbP = \{\mbP_i\}_{i \in [N]}$, $\widehat{\bs{T}} = \{\widehat{T}_i\}_{i \in [N]}$, such that $(\mbC_i(\bs\theta), \mbP_i)$ are the cost and transport matrices associated to the Wasserstein metric $d_\mcS$ defined in \Cref{thm:main_wasserstein_metric}, between the parametric operator $T_{\bs\theta}$ and $\widehat{T}_i$.\\
Following \citet{cuturi2014fast} and considering a differentiable kernel w.r.t the control points, we propose an inexact coordinate descent scheme with a cyclic update rule for optimizing problem~\ref{eq: constrained spectral barycenter problem}. Each cycle begins with the computation of the optimal transport plans, then the subsequent coordinate updates are performed with a few gradient descent steps and a closed-form projection scheme to enforce the constraints. In \Cref{appendix: barycenter} we provide more detail about the computational and theoretical aspects of the barycenters.

\section{Numerical experiments}
\label{section: experiments}

We now illustrate the benefits of our metric and barycenter through numerical experiments on dynamical systems. We first study the behavior of different similarity measures under various shifts and compare them on unsupervised and supervised machine learning tasks. Finally, we demonstrate the properties of operator barycenters using our proposed algorithm on two simulated examples.

\textbf{Compared similarity measures.} In addition to our proposed metric SGOT, we compare other OT-based similarities that focus solely on the eigenvalues (SOT)~\citep{redman2024identifying} or solely on the eigenspaces using a Grassmannian metric (GOT)~\citep{antonini2021geometry}. We also include metrics induced by the Hilbert–Schmidt and operator norms, as well as the Martin similarity~\citep{martin2002metric}.

\subsection{Comparison with other similarity measures}
\label{section: comparison behavior}
\begin{figure}[t]
    \centering \vspace{-8mm}
    \includegraphics[width=1.0\linewidth]{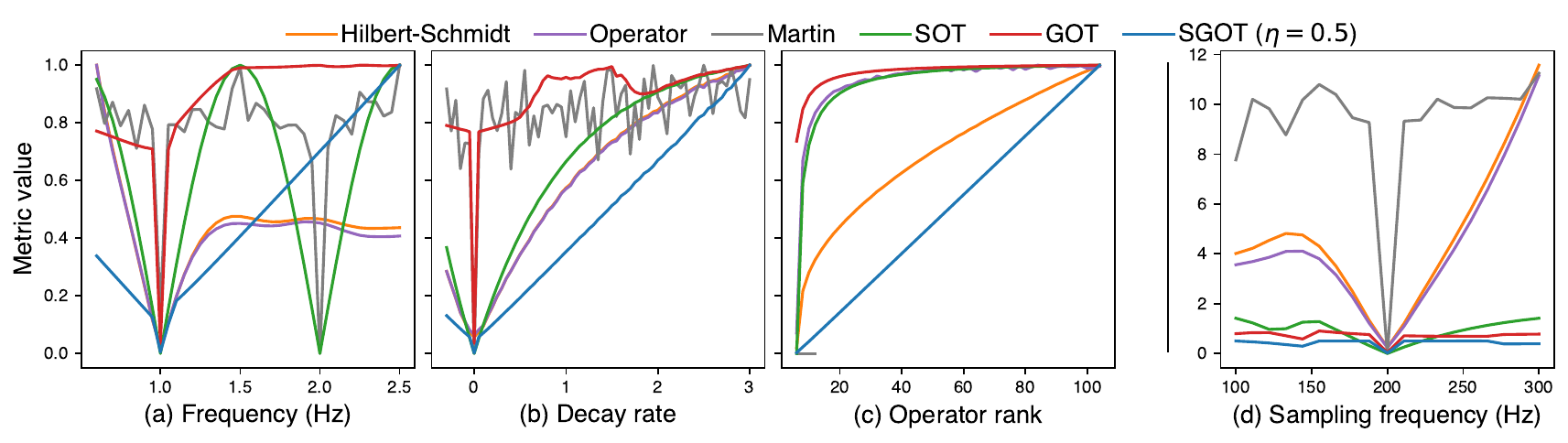}
    \vspace{-7mm}
    \caption{Similarity measures' behaviors under four scenarios of shifts of a linear oscillatory system: (a) frequency shift, (b) decay rate shift, (c) operator rank/subspace shift, (d) sampling frequency variation. In scenarios (a,b,c), metric values are normalized by their maximum.  }
    \vspace{-5mm}
    \label{fig: ablation study}
    
\end{figure}

{
\textbf{Simulated system and shifts.} First, we illustrate the behavior of different similarity measures between dynamical systems with regard to variations of the spectral decompositions of their Koopman operators.
 We consider a referent linear oscillatory system that is the sum of two simple harmonic oscillators with frequencies 0.5Hz and 1.0Hz, respectively, with a trajectory sampled at 200Hz. Considering the linear kernel, we compare the Koopman operator of the referent system with those of shifted systems according to four scenarios:  
    \textbf{(a) Frequency shift}, changes the 1Hz harmonic frequency.
    \textbf{(b) Decay rate shift}, changes the 1Hz harmonic decay rate.
    \textbf{(c) Subspace shift (rank)} gradually transforms the 1Hz sine
    wave into a 1Hz square wave signal using a  Fourier
    decomposition of a square wave signal with increasing order.
    \textbf{(d) Sampling frequency shift} where the system is sampled at different
    sampling frequencies instead of the reference 200Hz.
In each scenario, Koopman operators are estimated from sampled trajectories with the RRR method~\citep{kostic2022learning} with rank fixed to twice the number of harmonic oscillators. %

\textbf{Results \& interpretation.}
Values of the different metrics as a function of the shifts are shown in \Cref{fig: ablation study}. In scenarios (a,b,c), our metric SGOT grows linearly with the shifts almost everywhere. In contrast, other similarities tend to saturate quickly, and some even oscillate as shifts increase. In particular, OT-based competitors exhibit extreme behaviors: the pseudo-metric SOT oscillates in the frequency scenario, while GOT saturates fastest overall.
Likewise, the Hilbert-Schmidt and operator
metrics present both a saturating and an oscillating behavior, introducing many local minima. .
When changing the sampling frequency in scenario (d), only GOT and our metric SGOT are robust and remain low and almost constant. \Cref{appendix: comparison_exp} provides details and a sensitivity analysis of the $\eta$ parameter in SGOT. 
}

\subsection{Machine learning on dynamical systems}

\textbf{Experimental setup.}
We now illustrate and study the usability of our metric SGOT in machine learning
applications, both unsupervised and supervised, when sequential data are embedded by estimated operators governing their
dynamics.
In both experiments, time series are represented with Koopman operators estimated with the RRR method \citep{kostic2022learning} with a linear kernel. The experiments are run on 14 multivariate time series datasets from the UEA database \citep{ruiz2021great}. 

\begin{wrapfigure}{R}{0.4\linewidth}%
    \centering
    \vspace{-4mm}
    \includegraphics[width = \linewidth]{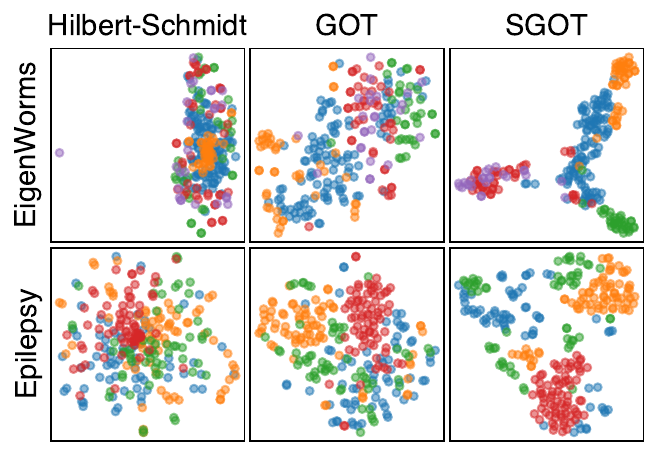}
    \vspace{-5mm}
    \caption{T-SNE embeddings. Datasets on rows, metrics on columns, classes in colors.}
    
    \label{fig: partial tsne} 
\end{wrapfigure}

\textbf{Dimensionality reduction.} We first explore the dimensionality reduction capabilities of the different similarity measures. For the 5 selected datasets and all similarities, the samples are embedded as 2D vector with the T-distributed Stochastic Neighbor Embedding (T-SNE) \cite{maaten2008visualizing}
method fitted on the cross-distance matrix estimated
with the similarity. \Cref{fig: partial tsne} illustrates the embeddings for the most discriminative similarities on datasets
\textit{EigenWorms} (motion) and \textit{Epilepsy} (biomedical). TSNE embedding for all 5 datasets and metrics are available in \Cref{appendix: ML additional results}, \Cref{fig:all tsne}. The Hilbert-Schmidt distance is too conservative, and no clusters or classes can be identified. For OT-based metrics, GOT better identifies classes; however, they do not form distinct clusters as is obtained with our metric SGOT.

\begin{figure}[t]
    \centering
    \vspace{-2mm}
    \includegraphics[width=.9\linewidth]{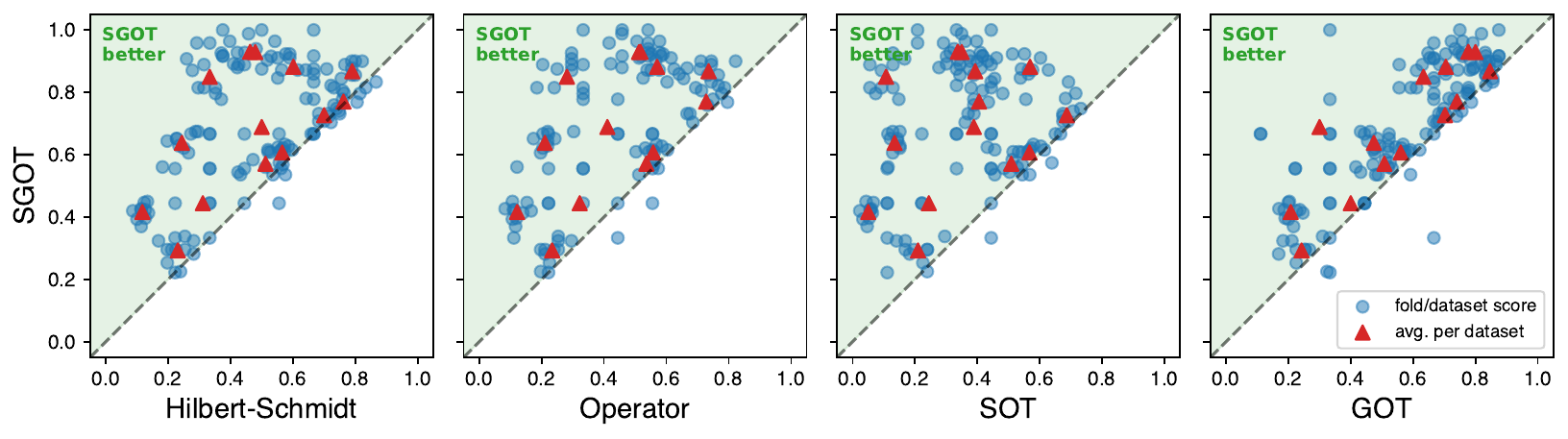}
      \vspace{-3mm}
    \caption{Classification performance (accuracy) comparison between SGOT and competitive metrics. Each point represents a dataset accuracy, with SGOT on the y-axis and the competing metrics on the x-axis.}
    \label{fig: classfication score plot}
\end{figure}

\textbf{Classification experiment.} We now quantify similarities'
performances on a classification task. We perform a 10-iteration Monte-Carlo nested cross-validation for all datasets with a (0.7,0.3)-train/test split ratio and no data preprocessing. We train K-NN classifiers with
each similarity measure and validate the parameters $K$ (and
the $\eta$-parameter for SGOT) with a 5-fold inner cross-validation. We report in \Cref{fig: classfication score plot} the accuracy performances between our metric SGOT and other metrics. Compared to Hilbert-Schmidt on operator metrics, SGOT performances are either equivalent or better by a large margin (large off-diagonal gap). Importantly, SGOT outperforms SOT and GOT by combining information on spectral and eigensubspaces. In \Cref{appendix: ML experiment}, we provide further details on the experimental protocol and additional results, including a full table of performances, critical diagram difference, and execution time.

\subsection{Barycenters and interpolation of dynamical systems}

\begin{figure}[t]
    \centering %
    \vspace{-2mm}
    \includegraphics[width=.9\linewidth]{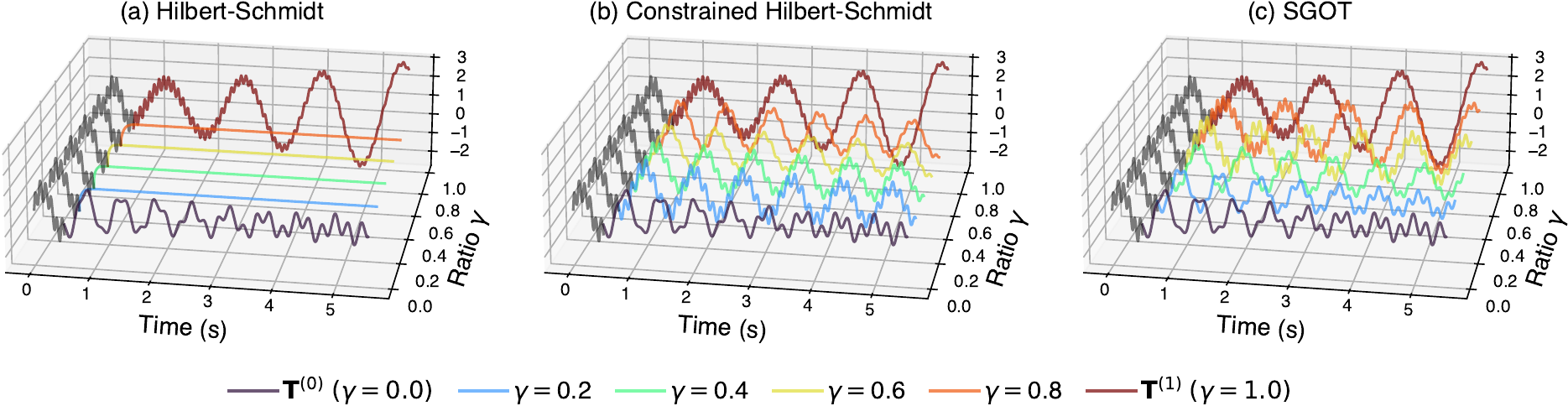}
    \vspace{-3mm}
    \caption{Predictions of interpolated systems between two linear oscillating systems from the same initialization. Interpolated systems correspond to weighted Fréchet barycenter for three different metrics: (a) Hilbert-Schmidt, (b) Hilbert-Schmidt with spectral decomposition constraints, and (c) our metric SGOT. The interpolation is controlled by a ratio parameter $\gamma \in [0,1]$ which sets operators' weights.}
    \label{fig: interpolation}\vspace{-4mm}
\end{figure}

\textbf{Interpolation between 1D DS.} In this experiment, we compare the interpolation between dynamical systems through the weighted Fréchet barycenters of their Koopman operators, estimated with a linear kernel, for different metrics. The two systems are linear oscillatory systems, each being the sum of two simple harmonic oscillators with different frequencies and decay rates, and additive Gaussian noise. The interpolation is controlled by a ratio parameter $\gamma \in [0,1]$ with weights ($1-\gamma, \gamma$) in the Fréchet mean problem \eqref{eq:frechet_mean_problem}. We compare (a) the Hilbert-Schmidt metric without spectral decomposition constraints given by $\mbT_{bar} = (1-\gamma) \mbT^{(0)} + \gamma \mbT^{(1)}$, (b) the Hilbert-Schmidt metric with spectral decomposition constraints, and (c) our proposed metric SGOT. For (b) and (c), barycentric operators are estimated with the proposed optimization scheme, and experimental settings are detailed in \Cref{appendix: barycenter experiment}.
\\
The interpolated predictions, starting from an
identical initialization signal (in gray) containing all four frequencies, are illustrated in \Cref{fig: interpolation} for all
three metrics.  In the Hilbert-Schmidt case (\cref{fig: interpolation}.a) leads to overdamped systems $\forall0<\gamma<1$. Adding spectral decomposition constraints on the Hilbert-Schmidt barycenter (\cref{fig: interpolation}. (b) mitigates the damping effects; however, the oscillatory frequencies and decay rate converge to a local minimum close to initialization, as expected by the saturating behavior of the Hilbert-Schmidt metric (see \cref{fig: ablation study}).  Only SGOT barycenters naturally interpolate between the two systems, notably by retrieving the frequencies and the decay rates.

\begin{figure}[t]
    \centering \vspace{-5mm}
    \includegraphics[width=.9\linewidth]{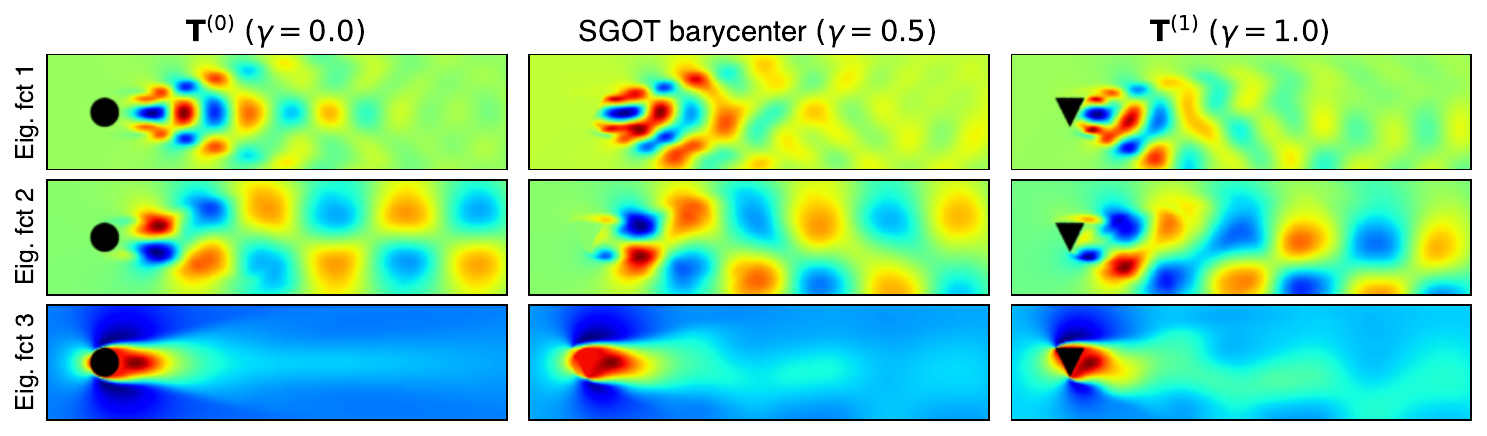}
    \caption{SGOT barycenter of Koopman operators of flows past static objects: a cylinder $\mbT^{(0)}$ and a triangle $\mbT^{(1)}$. Each operator's three leading right eigenfunctions are displayed and can be associated with the vortex-shedding phenomenon of the fluids flowing from left to right.}
    \vspace{-2mm}
    \label{fig:fluid barycenter}
\end{figure}

\textbf{Interpolating fluid dynamics.}
We aim to compute the barycenter of two fluid dynamics systems. To that end, we consider the \textit{Flow past a bluff object} dataset~\citep{tali2025flowbench}, which gathers trajectories of time-varying 2D velocity and pressure fields of incompressible Navier-Stokes fluids flowing around static objects. We select two trajectories, one with a cylinder object and the other with a triangular object. We only kept the velocity field along the flowing direction for each trajectory, leading to trajectories containing 242 samples of 1024x256 grids, which we down-sampled to grids with a 256x64 resolution. We estimate a Koopman operator with linear kernel using the RRR method from each trajectory: $\mbT^{(0)}$ for the cylinder and $\mbT^{(1)}$ for the triangle. The operators are restricted to the fourth leading eigenvalues and eigenfunctions. We compute the SGOT barycenter with the optimization scheme described in \Cref{appendix: barycenter} with an initialization being the average of eigenvalues and eigenfunctions. In \Cref{appendix: barycenter experiment} we detail the experimental settings.  \\
\Cref{fig:fluid barycenter} illustrates the non-conjugated right eigenfunctions of all three Koopman operators (cylinder, barycenter, triangle). By symmetry of boundary conditions and the cylinder, the eigenfunctions in the cylinder case have an axial symmetry that is lost with the triangle. SGOT by interpolating between both introduces the asymmetry in the eigenfunctions of the barycenter.

\section{Conclusion}

In this paper, we proposed SGOT, a novel optimal transport metric between distributional representations of transfer operators in the joint spectral–Grassmann space. The metric has strong theoretical properties, induces a meaningful geometry for barycenters and interpolation, and can be computed efficiently. Numerical experiments demonstrate the superiority of the proposed metric for machine learning tasks and system interpolation. Our method opens the door to machine learning applications on dynamical systems, with future work including dictionary learning and conditional prediction to accelerate numerical simulations.

\section*{Acknowledgments}
This project received funding from the European Union’s Horizon Europe research and innovation program under grant agreement 101120237 (ELIAS), Fondation de l’Ecole Polytechnique, NextGenerationEU and MUR PNRR project PE0000013 CUP J53C22003010006 “Future Artificial Intelligence Research (FAIR)”.

\bibliography{arxiv}
\bibliographystyle{arxiv}

\appendix

\section{Related work}
\label{appendix: related_work}

\paragraph{Metric for linear dynamical systems.}
A substantial body of research addresses the comparison of (stochastic) linear dynamical systems (LDSs) and linear state-space models \citep{afsari2014distances}. Early methods exploit the Riemannian manifold structure of LDS spaces to define meaningful metrics \citep{hanzon1982riemannian}, with related developments in power spectral density spaces \citep{georgiou2007distances}, including approaches based on Wasserstein metrics \citep{gray2009probability}. However, these methods suffer from high computational cost. The Martin distance \citep{martin2002metric} offers a practical alternative, comparing ARMA models via their cepstrum. It has been generalized to state-space models and shown equivalent to metrics based on angles between observability subspaces \citep{de2002subspace,sinha2024formalisation}. Other approaches include kernel-based metrics derived from the Binet-Cauchy theorem \citep{vishwanathan2007binet}, Kullback–Leibler divergence \citep{chan2005probabilistic}, and moment matching \citep{bissacco2007classification}. However, compared to the Martin distance, these metrics are sensitive to
trajectory initial conditions, so extensions have been proposed to address this issue \citep{chaudhry2013initial}.

\paragraph{Extension to nonlinear dynamical systems.} For nonlinear dynamical systems, most work leverages the Koopman framework to linearize dynamics. The Binet-Cauchy kernel has been extended to nonlinear systems \citep{fujii2017koopman} within this context. Another kernel leverages Koopman representation to compare observability subspaces \citep{ishikawa2018metric}. The latest has been used alongside a deep learning method for estimating Koopman operators (ResDMD \citet{colbrook2023residual}) in the case of continuous spectrum \citep{sakata2024enhancing}. However, both kernels are sensitive to trajectory initial conditions like the linear case. In \citet{mezic2004comparison}, the authors propose metrics to compare the asymptotic dynamics of measure-preserving systems via Koopman representations, later extended to dissipative systems over finite time \citep{mezic2016comparison}.

\paragraph{Metric for topologically conjugated dynamical systems.}
Recently, interest has grown in comparing neural network dynamics in neuroscience and deep learning \citep{klabunde2025similarity}. Such comparisons often consider topological conjugacy, leading to metrics on quotient spaces. \citet{redman2022algorithmic,redman2024identifying} show that topologically conjugate systems share identical Koopman spectra and propose a pseudo-metric based on optimal transport. \citet{ostrow2023beyond} extends Procrustes analysis to compare Koopman representations up to orthogonal transformations, extending earlier work in the LDS setting \citep{afsari2013alignment}. \citet{glaz2025efficient} further generalizes these metrics to accommodate broader transformation classes.

\paragraph{Optimal transport on functional spaces.}
A related direction studies measures on functional spaces. Some works have studied measures on Gaussian processes
\citep{masarotto2019procrustes,mallasto2017learning}, for which there exists a closed-form of the metric. In \citet{antonini2021geometry}, the authors propose a theoretical Wasserstein metric between measures derived from the spectral decomposition of normal operators. More recently, \citet{zhu2024functional} introduced a computable approximation of the Wasserstein metric between measures on infinite-dimensional Hilbert spaces, obtained by restriction to linear mappings.

\section{Learning Koopman Transfer Operators with Kernel Methods}
\label{appendix: background_RKHS_Learning_Koopman}

In many practical scenarios, $\Koop{}$ is unknown, but data from system trajectories are available. For such cases, Koopman operator regression in reproducing kernel Hilbert spaces (RKHS) provides a learning framework to estimate $\Koop{}$ on $\Lii$~\cite{kostic2022learning}. Let $\RKHS$ be a RKHS with a bounded kernel $k$ and feature map $\phi$ such that $k(x,y) = \scalarp{\phi(x),\phi(y)}$. We recall that the injection operator $\TS : \RKHS \to \Lii$ is Hilbert-Schmidt \cite{caponnetto2007,Steinwart2008}, and thus so is the restricted Koopman operator $\TZ := \Koop{}\TS : \RKHS\to\Lii$.

The goal is to approximate $\TZ = \Koop{}\TS$ by minimizing the risk $\Risk(\Estim)= \EE_{x\sim \im} \sum_{i \in \N} \EE \big[ (h_i(X_{t+1}) - (\Estim h_i)(X_t))^2\,\vert X_t = x\big]$ over Hilbert-Schmidt operators $\Estim\in\HS{\RKHS}$, where $(h_i)_{i\in\N}$ is an orthonormal basis of $\RKHS$. This risk admits a decomposition $\Risk(\Estim)=\IrRisk + \ExRisk(\Estim)$, where
\begin{equation}
    \label{eq:ex_ir_risk}
\IrRisk=\hnorm{\TS}^2-\hnorm{\TZ}^2\geq0\quad \text{and} \quad \ExRisk(\Estim)= \hnorm{\Koop{} \TS-\TS\Estim}^2= \lVert\Koop{} \TS-\TS\Estim\rVert_{\rm{HS}(\RKHS,\Lii
)}^2
\end{equation}
are the irreducible risk and the excess risk, respectively. Using universal kernels, the excess risk can be made arbitrarily small: $\inf_{\Estim\in\HS{\RKHS}} \ExRisk(\Estim)= 0$.

A common approach is to solve the Tikhonov-regularized problem
\begin{equation}\label{eq:KOR_reg}
\min_{\Estim \in \HS{\RKHS}} \Risk^\reg(\Estim){:=}\Risk(\Estim) + \reg\hnorm{\Estim}^2,
\end{equation}
with $\reg>0$. Defining the covariance operator $\Cx := \TS^*\TS = \EE_{x\sim\im} \phi(x)\otimes \phi(x)$ and the cross-covariance operator $\Cxy : = \TS^*\TZ = \EE_{(x,y)\sim\rho} \phi(x)\otimes \phi(y)$ (where $\rho$ is the joint measure of consecutive states), the unique solution to \eqref{eq:KOR_reg} is the Kernel Ridge Regression (KRR) estimator $\RKoop:=\Creg^{-1} \Cxy$, where $\Creg:=\Cx+\reg\Id_\RKHS$.

To approximate the leading eigenvalues of $\Koop{}$, low-rank estimators are used. The Reduced Rank Regression (RRR) estimator~\cite{kostic2022learning} is the solution to \eqref{eq:KOR_reg} under a rank-$r$ constraint:
\begin{equation}\label{eq:KOR_RRR}
\Creg^{-1/2} \SVDr{ \Creg^{-1/2} \Cxy } = \argmin_{\Estim \in \HSr} \Risk^\reg(\Estim),
\end{equation}
where $\HSr$ denotes the set of rank-$r$ HS operators and $\SVDr{\cdot}$ is the $r$-truncated SVD.

Given data $\Data = \{(x_i,y_i)\}_{i\in[n]}$, empirical estimators are derived by minimizing the regularized empirical risk $\ERisk^\reg(\Estim){:=}\frac{1}{n}\sum_{i\in[n]} \norm{\phi(y_i) - \Estim^*\phi(x_i)}^2 + \reg\hnorm{\Estim}^2$. 
Introducing  the sampling operators for data $\Data$ and RKHS $\RKHS$ by
\begin{align*}
    \ES \colon \RKHS \to \R^{n} \quad \text{ s.t. }  f \mapsto \tfrac{1}{\sqrt{n}}[ f(x_{i})]_{i \in[n]} & \quad \text{ and } &  \EZ \colon \RKHS \to \R^{n} \quad \text{ s.t. }  f \mapsto \tfrac{1}{\sqrt{n}}[ f(y_{i})]_{i \in[n]},
\end{align*}
 and their adjoints by
\begin{align*}
   \ES^* \colon \R^{n} \to \RKHS \quad \text{ s.t. } w \mapsto \tfrac{1}{\sqrt{n}}\sum_{i\in[n]}w_i\fH(x_i) & \quad \text{ and } & \EZ^* \colon \R^{n} \to \RKHS \quad \text{ s.t. } w \mapsto \tfrac{1}{\sqrt{n}}\sum_{i\in[n]}w_i\fG(y_i),
\end{align*}
we obtain  $\ERisk^\reg(\Estim){=}\hnorm{\EZ {-} \ES \Estim}^2 + \reg\hnorm{\Estim}^2$.

The empirical covariance and cross-covariance operators are:
\begin{equation}\label{eq:empirical_cov}
\ECx := \ES ^*\ES,\quad \ECy := \EZ ^*\EZ, \quad \ECxy := \ES ^*\EZ.
\end{equation}
The corresponding regularized empirical covariance is $\ECreg : = \ECx+\reg \Id_{\RKHS}$. The kernel Gram matrices are:
\begin{equation}\label{eq:kernel_mx}
\Kx := \ES \ES^*, \quad \Ky := \EZ \EZ^*.
\end{equation}
The empirical RRR estimator is then $\ECreg^{-1/2} \SVDr{ \ECreg^{-1/2} \ECxy }$. These empirical estimators can be expressed in the form $\EEstim = \ES U_r V_r^\top \EZ$ for matrices $U_r,V_r \in\R^{n\times r}$~\cite{kostic2022learning}, enabling the computation of spectral decompositions in infinite-dimensional RKHS.

\begin{theorem}[\cite{kostic2022learning}]\label{thm:spec_decomp_e}
Let $1\leq r \leq n$ and $ \EEstim = \ES U_r V_r^\top \EZ$,  where $U_r,V_r \in\R^{n\times r}$. If $V_r^\top \Kyx U_r \in\R^{r\times r}$, for $\Kyx = n^{-1}[k(y_i,x_j)_{i,j\in[n]}]$, is full rank and non-defective, the spectral decomposition $(\eeval_i,\elefun_i, \erefun_i)_{i\in[r]}$ of $\EEstim$ can be expressed in terms of the spectral decomposition $(\eeval_i,\levec_i, \revec_i)_{i\in[r]}$~of~ $V_r^\top \Kyx U_r $ as $\elefun_i = \eeval_i \EZ^*V_r \levec_i / \abs{\eeval_i}$ and $\erefun_i = \ES^*U_r \revec_i$, for all $i\in[r]$.
\end{theorem}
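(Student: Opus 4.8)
The plan is to reduce the eigenproblem for the rank‑$r$ operator $\EEstim$ on $\RKHS$ to the finite $r\times r$ eigenproblem for $W:=V_r^\top\Kyx U_r$ via the elementary ``push‑through'' correspondence between the nonzero spectra of $AB$ and $BA$. First I would write the estimator in the factored form $\EEstim=\ES^* U_r V_r^\top\EZ$ (the adjoint $\ES^*$ is what makes the composition $\RKHS\to\C^n\to\C^r\to\C^n\to\RKHS$ well defined) and set $A:=\ES^* U_r\colon\C^r\to\RKHS$ and $B:=V_r^\top\EZ\colon\RKHS\to\C^r$, so that $\EEstim=AB$. Since $(\EZ\ES^*)_{ij}=n^{-1}k(y_i,x_j)$ one has $\EZ\ES^*=\Kyx$, hence $BA=V_r^\top\Kyx U_r=W$, the matrix in the statement; by hypothesis $W$ is invertible and diagonalizable, so it has eigentriplets $(\eeval_i,\levec_i,\revec_i)_{i\in[r]}$ with $W\revec_i=\eeval_i\revec_i$, $\levec_i^* W=\eeval_i\levec_i^*$, $\eeval_i\neq0$, and (after rescaling) $\levec_i^*\revec_j=\delta_{ij}$.

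For the right eigenfunctions I would substitute directly: $\EEstim(A\revec_i)=A(BA)\revec_i=AW\revec_i=\eeval_i A\revec_i$, so $\erefun_i:=A\revec_i=\ES^* U_r\revec_i$ is an eigenfunction of $\EEstim$ with eigenvalue $\eeval_i$, and it is nonzero because $A\revec_i=0$ would force $W\revec_i=B(A\revec_i)=0$, contradicting $\eeval_i\neq0$. Applying $B$ to a vanishing linear combination of the $A\revec_i$ and using $BA\revec_i=\eeval_i\revec_i$ together with the linear independence of the $\revec_i$ shows that $\{\erefun_i\}_{i\in[r]}$ is linearly independent; since the rank of $\EEstim$ is at most $r$, the operator annihilates a complement of their span, so $\EEstim$ is non‑defective with nonzero spectrum exactly $\{\eeval_i\}_{i\in[r]}$ and the stated right eigenfunctions.

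For the left eigenfunctions I would pass to the adjoint $\EEstim^*=\EZ^* V_r\,U_r^\top\ES=A'B'$ with $A':=\EZ^* V_r$ and $B':=U_r^\top\ES$, and use the symmetry of the kernel, $(\ES\EZ^*)_{ij}=n^{-1}k(x_i,y_j)=(\Kyx)_{ji}$, to get $B'A'=U_r^\top\Kyx^\top V_r=W^\top$. Since $W$ is real, the left‑eigenvector relation for $W$ is equivalent to $W^\top\levec_i=\overline{\eeval_i}\,\levec_i$, so the same push‑through applied to $A'B'$ makes $A'\levec_i=\EZ^* V_r\levec_i$ a nonzero eigenfunction of $\EEstim^*$ with eigenvalue $\overline{\eeval_i}$, i.e. a left eigenfunction of $\EEstim$ associated with $\eeval_i$. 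Multiplying by the unit‑modulus scalar $\eeval_i/\abs{\eeval_i}$ does not affect this; it is only a phase normalization, and I would fix it by computing the pairing $\langle\ES^* U_r\revec_i,\EZ^* V_r\levec_j\rangle_{\RKHS}=\langle\EZ\ES^* U_r\revec_i,V_r\levec_j\rangle_{\C^n}=\levec_j^* W\revec_i=\eeval_i\delta_{ij}$ (using $\EZ\ES^*=\Kyx$ and $\levec_j^*\revec_i=\delta_{ij}$), so that $\langle\erefun_i,\elefun_j\rangle$ is a positive multiple of $\delta_{ij}$ and $(\eeval_i,\elefun_i,\erefun_i)_{i\in[r]}$ is a bona fide spectral decomposition of $\EEstim$.

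This argument is short once the push‑through identity is in hand; the part that requires care is the conjugation/adjoint bookkeeping, since $W$ is a real matrix with possibly complex eigenvalues, so one must track complex conjugates through the transpose $W^\top$, through the kernel‑symmetry identity $\ES\EZ^*=\Kyx^\top$, and through the Hermitian inner products on $\C^n$ and $\RKHS$. The only substantive point is making sure the push‑through is bijective on the relevant eigenspaces, so that no eigenvalue is lost and no spurious one is introduced; this is exactly where the hypothesis that $V_r^\top\Kyx U_r$ be full rank and non‑defective is used.
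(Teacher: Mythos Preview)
Your argument is correct, and in fact the paper does not supply a proof of this statement at all: it is quoted as background from \cite{kostic2022learning} in Appendix~B with no accompanying argument. Your push-through reduction $\EEstim=AB$, $W=BA$ via $\EZ\ES^*=\Kyx$ is exactly the mechanism used in that reference, so there is nothing to contrast. You also rightly flag that the displayed formula $\EEstim=\ES U_r V_r^\top \EZ$ should read $\EEstim=\ES^* U_r V_r^\top \EZ$ for the composition $\RKHS\to\RKHS$ to type-check; this is a typo carried over in the paper's restatement, and your correction is consistent with the eigenfunction formulas $\erefun_i=\ES^* U_r\revec_i$ and $\elefun_i=\eeval_i\EZ^* V_r\levec_i/|\eeval_i|$ that follow.
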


\paragraph{RKHS embedddings into $\Lii$.}

We recall some facts on the injection operator $\TS$. Note first that $\TS\in\HS{\RKHS,\Lii}$. Then according to the spectral theorem for positive self-adjoint operators, $\TS$ has an SVD, i.e. there exists at most countable positive sequence $(\sigma_j)_{j\in J}$, where $J:=\{1,2,\ldots,\}\subseteq\N$, and ortho-normal systems $(\ell_j)_{j\in J}$ and $(h_j)_{j\in J}$ of $\cl(\range(\TS))$ and $\Ker(\TS)^\perp$, respectively, such that $\TS h_j = \sigma_j \ell_j$ and $\TS^* \ell_j = \sigma_j h_j$, $j\in J$. 

Now, given $\rpar{}\geq 0$, let us define scaled injection operator $S_{\rpar{}} \colon \RKHS{} \to \Lii{}$ as
\begin{equation}\label{eq:injection_scaled}
S_\rpar:= \sum_{j\in J}\sigma_j^{\rpar}\ell_j\otimes h_j.
\end{equation}
Clearly, we have that $\TS = S_1$, while $\range{S_0} = \cl(\range(\TS))$. Next, we equip $\range(S_{\rpar})$ with a norm $\|\cdot\|_\rpar$ to build an interpolation space:
\[
[\RKHS]_\rpar:=\left\{ f\in\range(S_{\rpar})\;\vert\; \|f\|_\rpar^2:= \sum_{j\in J}\sigma_j^{-2 \rpar} \scalarp{f,\ell_j}^2 <\infty \right\}.
\]

\section{Spectral-Grassmann Wasserstein metric (SGOT) proof}
\label{appendix: proof sgot}

\subsection{Main proof}
In this section we prove that $\mcS_r(\mcH)$ can be endowed with a Wasserstein metric based on operator spectral decomposition as summarized by the following theorem:
\begin{thm}\label{thm:main_wasserstein_metric appendix}
Let $\mcH$ be a separable $\mdC$-Hilbert space and $\mcS_r(\mcH)$ the set of non-defective operators with rank at most $r\in\mcD$. Let $(\mcG, d_\mcG)$ be Grassmanian manifold of the space of Hilbert-Schmidt operators on $\RKHS$. Given $p{\in}\mdN^*$ and $\eta {\in} (0,1)$, let $\mu{\colon} S_r(\mcH){\to}\mcP_p(\mdC \times \mcG)$ and  $d_\eta{\colon} (\mdC \times \mcG)^2 {\to}\mdR_+$ be given by
\begin{equation}\label{eq: baseline metric appendix}
\mu(\HKoop) {\triangleq} \textstyle{\sum_{j \in [\ell]} }\frac{m_j}{m_{tot}}\delta_{(\lambda_j,\mcV_j)}\quad\text{ and }\quad d_\eta[(\lambda',\mcV'),(\lambda', \mcV')]{\triangleq}\eta |\lambda-\lambda'| {+} (1{-}\eta) \,d_\mcG(\mcV,\mcV'),
\end{equation}
with $|\cdot|$ applied on polar coordinates $\lambda,\lambda'$, $m_{tot} = \sum_{i \in [\ell]}m_i$, $\mcV_j$ the $m_j$-dimensional vector space in ${\rm HS}(\RKHS,\RKHS)$ spanned by the rank one operators of the right/left eigenfunctions associated with the eigenvalue $e^{\lambda_j}$ of $\HKoop$ (same notation for $\HKoop'$). Then, $(\mcS_r(\mcH),d_{\mcS})$ is a metric space, where $d_{\mcS}{\colon}\mcS_r(\mcH){\to}\mdR_+$ is given by
\begin{equation}\label{eq: spectral metric on operator appendix} 
      d_{\mcS}(\HKoop,\HKoop') =  W_{d_\eta,p}(\mu(\HKoop),\mu(\HKoop')).
   \end{equation}
\end{thm}

{\bf Discrete Optimal transport.}~For conciseness, we first recall discrete OT where one seeks a transport plan mapping samples from a source distribution to those of a target distribution while minimizing a transportation cost.
Formally, consider $\mcZ_S = \{z_i \in \mcZ \ | \ i \in [k_S]\}$ and $\mcZ_T =
\{z'_i \in \mcZ \ | \ i \in [k_T]\}$ as the sets of source and target samples in
a space $\mcZ$. We associate with these sets the probability distributions $\mu_S
= \sum_{i \in [k_S]}a_i \delta_{z_i}$ and $\mu_T = \sum_{i \in [k_T]}
b_i\delta_{z'_i}$  with $(\mba,\mbb) \in \Delta^{k_S} \times \Delta^{k_T}$ and
$\Delta^n = \{ \mbp \in \mdR_+^n\ | \ \sum_{i \in [n]}p_i = 1\}$ the $n$-simplex. Let
$\mbC \in \mdR_+^{k_S \times k_T}$ be the cost matrix with $C_{ij} =
c(z_i,z'_j)$ being the transport cost between $z_i$ and $z'_j$ given by the cost
function $c$. The Monge-Kantorovich problem aims at identifying a coupling
matrix, also denoted as OT plan $\mbP^* \in \mdR_+^{k_S \times k_T}$, that is solution of the
constrained linear problem:
\begin{equation}
   \label{eq: OT problem appendix}
   \min\limits_{\mbP \in \Pi(\mu_S,\mu_T)} \innerp{\mbC}{\mbP}_F \quad \text{s.t} \quad \Pi(\mu_S,\mu_T) = \{ \mbP \in \mdR_+^{k_S \times k_T} \ | \ \mbP\mb1 = \mba,~ \mbP^\intercal\mb1 = \mbb \}~,
\end{equation}
where $\Pi(\mba,\mbb)$ is the set of joint-distributions over $\mcZ_S \times
\mcZ_T$ with marginals $\mba$ and $\mbb$. In what follows, we denote
$L_c(\mu_S,\mu_T)$ the application returning the optimal value of problem (\ref{eq:
OT problem appendix}) where $c$ indicates the cost function. A fundamental property of OT is that, under suitable conditions on the cost function, the Wasserstein distance is a metric on the space of probability measures:  
\begin{thm}[Theorem 6.18 in \citet{villani2008optimal}]
\label{thm: wasserstein metric appendix}
Let $(\mcZ,d)$ be a separable complete metric space endowed with its Borel set. Let $p \in \mdN^*$, and $\mcP_p(\mcZ)$ the set of probability distributions on $\mcZ$ admitting moments of order $p$. Consider the application:
\begin{equation}
   W_p : (\mu,\nu) \in  \mcP(\mcZ) \times \mcP(\mcZ) \mapsto (L_{d^p}(\mu,\nu))^\frac{1}{p} \in \mdR_+~.
\end{equation}
Then, $(\mcP_p(\mcZ),W_p)$ defines a separable complete metric space, known as a Wasserstein space.
\end{thm}

{\bf Main proof.} For proof correctness, we restrict the Grassmann manifold on Hilbert-Schmidt operators $\mcG$ to the set of operators with rank at most $r$, denoted by $\mcG_r$. This restricted space endowed with the Hilbert-Schmidt norm is a complete metric space as detailed in \Cref{appendix: Grassman}. The next two propositions detail the essential building blocks to derive a Wasserstein metric on $\mcS_r(\mcH)$. \Cref{prp: inclusion map to probability space} specifies an inclusion map from $\mcS_r(\mcH)$ to a space of probability measures while \cref{prp: metric projector eigenvalue} defines a metric on the measures' support space with sufficient topological properties to derive a Wassertein metric on $\mcS_r(\mcH)$.

\begin{prp}
\label{prp: inclusion map to probability space}
Consider $p \in \mdN^*$ and the embedding map:
\begin{equation}
   \mu : T \in S_r(\mcH) \mapsto \sum_{i \in [l]} \frac{l_i}{l_{tot}}\delta_{(\lambda_i,\mcV_i)} \in \mcP_p(\mdC \times \mcG_r)~,
\end{equation}
where $\mcV_j$ the $m_j$-dimensional vector space in ${\rm HS}(\RKHS,\RKHS)$ spanned by the rank one operators of the right/left eigenfunctions associated with the eigenvalue $e^{\lambda_j}$ of $\HKoop$, and $m_{tot} = \sum_{i \in [l]} m_i$. Then, $L$ is a one-to-one inclusion map. 
\end{prp}
\begin{proof}
   Let $T \neq T' \in \mcS_r(\mcH)$, both operators differ by at least one pair $(\lambda_i,\mcV_i) \in \mcG_r$, by symmetry the pair is associated to $T$. Since $(\mdC,|.|)$ and $(\mcG_r, d_\mcG)$ are metric spaces, the singleton $\{(\lambda_i,\mcV_i)\}$ belongs to the Borel set. Therefore, $\mu_T((\lambda_i,\mcV_i)) = m_i/m_{tot}$ while $\mu_{T'}((\lambda_i,\mcV_i)) = 0$, i.e $\mu_T \neq \mu_{T'}$. 
\end{proof}

\begin{prp}
\label{prp: metric projector eigenvalue}
Consider $\eta \in (0,1)$, $\omega_{ref} \in \mdR_+^*$, and the application:
\begin{equation}
   \label{eq: baseline metric appendix proposition}
   d_\eta : ((\lambda,\mcV),(\lambda', \mcV')) \in (\mdC \times \mcG_r)^2 \mapsto \eta |\lambda-\lambda'| + (1-\eta) d_\mcG(\mcV,\mcV') \in \mdR_+~.
\end{equation}
Then, $(\mdC \times \mcG_r, d_\eta)$ is a separable complete metric space.
\end{prp}
\begin{proof}
   By proposition~\ref{prp: grassman metric}, $(\mcG_r,d_\mcG)$ is a separable complete metric space. Hence, for any $\eta \in (0,1)$, $(\mdC \times \mcG_r,d_\eta)$ is a separable complete metric space as $(\mdC, d_{val})$ is homeomorphic to $(\mdC, |.|)$.
\end{proof}
Note that we introduce a metric, $d_{val}$, that compares Koopman modes' eigenvalues from physics-informed quantities, namely the time-scales $\rho$ and the ocsillating frequencies $\omega$. The previous two propositions lead to our main contribution, a Wasserstein metric on the space of non-defective finite rank operators  $\mcS_r(H)$:
\begin{prp}
   \label{prp: spectral metric on operator}
   Consider $\eta \in (0,1)$, $p \in \mdN^*$, and the application: 
   \begin{equation}
      d_{\mcS}: (T,T') \in \mcS_r(\mcH) \times \mcS_r(\mcH) \mapsto  W_{d_\eta,p}(\mu(T),\mu(T')) \in \mdR_+~.
   \end{equation}
   Then, $(\mcS_r(\mcH),d_{\mcS})$ is a metric space.
\end{prp}

\begin{proof}
   Application of \cref{thm: wasserstein metric appendix} with \Cref{prp: metric projector eigenvalue,prp: inclusion map to probability space}. 
\end{proof}

\subsection{Grassman metric}
\label{appendix: Grassman}

A Grassmann manifold is a collection of vector subspaces of a given vector space. Such manifolds appear in a handful of applications whenever subspaces must be compared. The particular case of Grassman manifolds gathering all equidimensional subspaces of a finite-dimensional real vector space has been extensively studied, see \citet{bendokat2024grassmann} for a thorough review. In our context, this particular setting is limiting as we must consider a manifold including subspaces of various dimensions over a possibly infinite-dimensional complex vector space. On such manifolds, a classical metric compares subspaces through the associated orthogonal projectors with the operator norm \citep{andruchow2014grassmann}. Unfortunately, this metric is computationally expensive, and the topology it induces does not provide the necessary conditions to derive Wasserstein metrics, namely, the separability. In the following proposition, we define a Grassmann manifold with the necessary conditions to derive Wasserstein metrics. 

\begin{prp}
\label{prp: grassman metric}
   Let $r \in \mdN^*$ be fixed, and $\mcG_r(\mcH)$ denote the set of all closed vector subspaces of a (possibly infinite-dimensional) separable Hilbert space $\mcH$ having dimension at most $r$. Endow $\mcG_r(\mcH)$ with the well-defined metric: 
   \begin{equation}
      d_\mcG: (\mcU,\mcV) \in \mcG_r(\mcH) \times \mcG_r(\mcH) \mapsto \| P_\mcU - P_\mcV \|_{\mcH\mcS} \in \mdR_+~,
   \end{equation}
   where $P_\mcU$ is the orthogonal projector onto $\mcU$, and $\|.\|_{\mcH\mcS}$ is the Hilbert-Schmidt norm. Then $(\mcG_r(\mcH),d_\mcG)$ is a separable complete metric space.
\end{prp}

\begin{proof}

   Before the main proof, we investigate the properties of an inclusion map, which is useful for determining the metric and completeness properties. 

   \begin{lmm}
      \label{lmm: inclusion map}
      The map $i:\mcV \in \mcG_r(\mcH) \mapsto P_\mcV \in \mcH\mcS(\mcH)$, which associates to any subspace the orthogonal projector onto itself, is well defined and a one-to-one inclusion.
   \end{lmm}
   \begin{proof}
      Since any $\mcV \in \mcG_r(\mcH)$ is finite dimensional, it is closed, and the orthogonal projector $P_\mcV$ is a well-defined bounded linear operator by consequence of the Hilbert projection theorem. Furthermore, since $\mcH$ is separable, it admits an orthogonal basis, respecting the orthogonal decomposition $\mcH = \mcV \oplus \mcV^\perp$. Since $\dim(\mcV)\leq r$ and by invariance of the Hilbert-Schmidt norm to change of basis, $\|P_\mcV\|_{\mcH\mcS}$ is finite, more precisely: $\|P_\mcV\|_{\mcH\mcS}^2 = \dim(\mcV) < r$. Furthermore, for any $\mcV \neq \mcV' \in \mcG_r(\mcH)$, $P_\mcV \neq P_{\mcV'}$ due to the orthogonal decomposition $\mcH = \mcV \cap \mcV' \oplus \mcV/(\mcV \cap \mcV') \oplus \mcV'/(\mcV \cap \mcV') \oplus (\mcV \cup \mcV')^\perp$.
   \end{proof}

   \begin{lmm}
      \label{lmm: projector completness}
      $\mcP_r = \{P_\mcV \ | \ \mcV \in \mcG_r(\mcH) \}$ is a closed subspace of $\mcH\mcS(\mcH)$ for the topology induced by the Hilbert-Schmidt norm.
   \end{lmm}

   \begin{proof}
      First notice that, $\mcP_r \subset \mcH\mcS(\mcH)$ and $\mcH\mcS(\mcH)$ is a Hilbert space, thus complete. Consider a sequence $(P_n)_{n \in \mdN} \in \mcP_r$ converging to an element $P \in \mcH\mcS(\mcH)$ (i.e. $\|Pn-p\|_{\mcH\mcS} \to 0 $), let's prove that $P \in \mcP_r$. 
      
      Since $P \in \mcH\mcS(\mcH)$, it follows that the adjoint operator $P^* \in \mcH\mcS(\mcH)$ exists and since $\|P^* - P_n^*\|_{\mcH\mcS} = \|P -P_n\|_{\mcH\mcS} \to 0$, the operator $P$ is self-adjoint $P = P^*$.
      Furthermore by composition $P^2 \in \mcH\mcS(\mcH)$, and: 
      \begin{align}
            \|P^2 - P\|_{\mcH\mcS} & \leq \|P^2 - P_n^2\|_{\mcH\mcS} + \|P_n^2 - P_n\|_{\mcH\mcS} + \|P_n - P\|_{\mcH\mcS} \\ 
            & \leq \|P^2 - P_n^2\|_{\mcH\mcS} + \|P_n - P\|_{\mcH\mcS} \\ 
            & \leq \|P_n - P\|_{\mcH\mcS} (1 + \|P\|_{\mcH\mcS} + \|P_n\|_{\mcH\mcS}) 
      \end{align}
      Since $ \|P_n - P\|_{\mcH\mcS} \to 0$, it follows that $P^2 = P$, meaning that $P$ is an orthogonal projector. Let $\mcV$ denote the closed vector subspace associated to $P$. Since $P$ is an orthogonal projector with a finite Hilbert-Schmidt norm, $\mcV$ is finite dimentional, and $\dim(\mcV) =  \|P\|_{\mcH\mcS}^2 \leq r$, as $\|P_n\|_{\mcH\mcS}^2 \leq r$ for any $n \in \mdN$. Thus $P \in \mcP_r$, indicating that $\mcP_r$ is a closed subset of $\mcH\mcS(\mcH)$.
   \end{proof}

   \textbf{Main proof.} Since the map $i$, defined in Lemma~\ref{lmm: inclusion map}, is a one-to-one inclusion into the space $\mcH\mcS(\mcH)$, the metric derived from the Hilbert-Schmidt norm ($\|.\|_{\mcH\mcS}$) induces a metric onto the space $\mcG_r(\mcH)$. Furthermore, since $\mcP_r = \{P_\mcV \ | \ \mcV \in \mcG_r(\mcH) \}$ is a closed subset of a complete space by Lemma~\ref{lmm: projector completness}, it is complete. Hence, the metric space $(\mcG_r(\mcH),d_\mcG)$ is complete. Lastly, the space $\mcH\mcS(\mcH)$ is separable as it is homeomorphic to $\mcH \otimes \mcH$, which is a separable space as the tensor product of the separable space $\mcH$. Hence $\mcP_r \subset \mcH\mcS(\mcH)$ is also separable by inclusion. Finally, the metric space $(\mcG_r(\mcH),d_\mcG)$ is separable and complete, which concludes the proof.
   
\end{proof}

\section{Spectral Grassman barycenter}
\label{appendix: barycenter}

\subsection{Problem formulation}
\label{appendix: barycenter theory}
Computing barycenters is a fundamental problem for many unsupervised methods. When data lie in a metric space, it is known as the \emph{Fréchet mean problem}. It involves identifying an element that minimizes a weighted sum of distances to the observations. Formally, given the importance weights $\bs\gamma {\in} \Delta^N$, assuming~\ref{enum:assumption1}-\ref{enum:assumption3}, for $p{=}2$ in \Cref{thm:main_wasserstein_metric} we aim to solve:
\begin{equation}
   \label{eq:frechet_mean_problem appendix}
   \argmin_{T \in \mcS_r(\mcH)} \sum_{k \in [N]} \gamma_i d_\mcS(\HKoop,\HTO{k})^2, 
\end{equation} 
By construction of $d_\mcS$, problem~\ref{eq:frechet_mean_problem} corresponds to the estimation of Wasserstein barycenter over a set of finite measures with support on a manifold embedded in a (possibly infinite-dimensional) Hilbert space. From a theoretical standpoint, the existence (and uniqueness) of Wasserstein barycenters has been established in several settings, including continuous measures~\citep{agueh2011barycenters}, and discrete measures on finite-dimensional Euclidean spaces~\citep{anderes2016discrete}, and measures on geodesic spaces~\citep{le2017existence}. In \citet{han2024geometry}, the authors address the case of continuous measures on infinite-dimensional metric spaces. In our settings, we assume the existence of a barycenter in the closure of $\mcS_r(\mcH)$, see discussion on the extension to general operators in \cref{section: method}. A formal proof would require extending previous works to finite measures on manifolds in infinite-dimensional Hilbert spaces.

From a computational standpoint, problem~\ref{eq:frechet_mean_problem appendix} is closely related to the \emph{free-support Wassertein barycenter} estimation, which aims at optimizing the support and, optionally, the mass of the atoms parametrizing the barycenter. State-of-the-art algorithms typically rely on a coordinate descent scheme \citep{wright2015coordinate}, alternating between transport plan computation and measure optimization with strategies including gradient descent \citep{cuturi2014fast}, fixed point iteration \citep{alvarez2016fixed,lindheim2023simple}, stochastic optimization \citep{claici2018stochastic,li2020continuous}, or proximal operators \citep{qian2021inexact}. In our context, on the measure's support, i.e., the barycenter's spectral decomposition, must be optimized as the eigensubspaces' dimensions condition the masses according to the embedding map in \cref{eq: baseline metric appendix}.

\paragraph{A parametric problem formulation.} Whenever the RKHS $\mcH$ is infinite dimensional (the finite case is discussed in  \cref{appendix: barycenter finite dimensional rkhs}), the Fréchet mean problem (\cref{eq:frechet_mean_problem appendix}) is intractable in its original form. We restrained the optimization over a set of parametrized operators defined such that for any $\bs\theta \triangleq (\bs\lambda,\bs\alpha,\bs\beta,\mbx)$: 
\begin{equation}
   \label{eq: parametrized operator appendix}
   T_{\bs\theta} : h \in \mcH \mapsto  \textstyle{\sum_{i \in [r]}} \lambda_i \innerp{ \kappa\bs\alpha_i}{h}_\mcH \kappa\bs\beta_i \in \mcH
\end{equation}

where $\bs\lambda \in \mdC^r$, $\mbx \in \mcX^n$ are state space control points, and $\bs\alpha,\bs\beta \in \mdC^{n \times r}$ control parameters acting on the representer functions $\kappa_\mbx = \{\kappa(.,x_j)\}_{j \in [n]}$ with $\kappa$ the kernel of $\mcH$, i.e. $\kappa_{\mbx}\bs\alpha_i \triangleq \sum_{j \in [n]}\kappa_{x_j}\alpha_{ji}$. While these operators are compact with rank at most $r$, further constraints on the control points and parameters are required to ensure a spectral decomposition (see \Cref{eq: modal decomposition}). Together with the definition of discrete optimal transport (see \Cref{sec:background}), it leads to the constrained optimization problem:
\begin{equation}
   \label{eq: constrained spectral barycenter problem appendix}
   \argmin\limits_{\bs\theta, \mbP} \sum_{i \in [N]} \gamma_i \innerp{\mbC_i(\bs\theta)}{\mbP_i}_F \quad \textrm{s.t.} \quad \left\{
   \begin{array}{ll}
    \bs \alpha^* \mbK \bs\beta = \mbI & \mbK = \{\kappa(x_i,x_j)\}_{(i,j) \in [n]^2} \\
    \bs\beta_j^* \mbK \bs\beta_j = 1,~ \forall j \in [r] & \mbP_i \in \Pi(\mu_{T_{\bs{\theta}}},\mu_{T_i}),~\forall i \in [N]
   \end{array}\right.
\end{equation}
where $\mbP = \{\mbP_i\}_{i \in [N]}$, $\widehat{\bs{T}} = \{\widehat{T}_i\}_{i \in [N]}$, such that $(\mbC_i(\bs\theta), \mbP_i)$ are the cost and transport matrices associated to the Wasserstein metric, $d_\mcS$ defined in proposition~\ref{prp: spectral metric on operator}, between the parametric operator $T_{\bs\theta}$ and $\widehat{T}_i$.

\subsection{Barycenter estimation method}
\label{appendix: barycenter estimation}

\paragraph{An inexact coordinate descent scheme.} In what follows, let $\mcX$ be a bounded open set of $\mdR^d$ with $d \in \mdN^*$ and $k : \mcX \times \mcX \to \mdR$ be a differentiable kernel.
Following \citet{cuturi2014fast}, we propose an inexact coordinate descent scheme with a cyclic update rule designed to converge to a stationary point of problem~\ref{eq: constrained spectral barycenter problem appendix}. Each cycle begins with the computation of the exact optimal transport plans $\mbP$ to enforce sparsity. This step is carried out with the algorithm of \citet{bonneel2011displacement}, whose complexity depends on the number of eigenvalues, typically small in practice \citep{brunton2021modern}. The subsequent coordinate updates are performed using a few gradient descent steps with a first-order optimizer such as \textsc{Adam}~\citep{Kingma2014AdamAM}. It starts with the eigenvalues $\bs{\lambda}$, optionally followed by the state spaces control points $\mbx$, for which no optimization constraints exist. Next, the right eigenfunctions, $\bs\beta$, are updated only considering the normalization constraints: $\bs{\beta_j}^*\mbK\bs\beta_j = 1$, $j \in [r]$. Finally, the left eigenfunctions, $\bs\alpha$, are updated considering the affine constraints: $\bs\alpha^*\mbK\bs\beta = \mbI$, leading to an iterated closed-form projection scheme detailed in \Cref{eq: projection alpha}. \Cref{alg: main algorithm} summarizes the full procedure, and further implementation details are provided in the next paragraphs. We usually repeat 10 gradient descent steps in experiments when updating $\bs\lambda,\bs\alpha,\bs\beta$ and $\mbx$ at each cycle.

\begin{algorithm}
\begin{algorithmic}[1]
\Require $\widehat{\bs{T}} \triangleq \{\widehat{T}_i\}_{i \in [N]} \in \mcS_r(\mcH)^N$, 
\State $\bs\theta \triangleq(\bs\lambda, \bs\alpha, \bs\beta, \mbx) \gets Initialization(\widehat{\bs{T}})$
\Comment{Operator parameters, see \cref{eq: parametrized operator}}
\While{not converged}
\State $\mbP \gets ComputeTransportPlans(T_{\bs\theta},\widehat{\bs{T}})$
\Comment{See \Cref{thm:main_wasserstein_metric}, and \Cref{sec:background}}
\State $\bs\lambda \gets UpdateEigenValues(\bs\theta,\mbP,\widehat{\bs{T}})$ 
\If{optimize control points}
\State $\mbx \gets UpdateControlPoints(\bs\theta,\mbP,\widehat{\bs{T}})$
\EndIf
\State $\bs\beta \gets UpdateRightEigenFunctions(\bs\theta,\mbP,\widehat{\bs{T}})$ 
\Comment{Detailed in \Cref{alg: Update right eigenfunctions}}
\State $\bs\alpha \gets UpdateLeftEigenFunctions(\bs\theta,\mbP,\widehat{\bs{T}})$ 
\Comment{Detailed in \Cref{alg: Update left eigenfunctions}}
\EndWhile
\State $\mbP \gets ComputeTransportPlans(T_{\bs\theta},\widehat{\bs{T}})$
\\
\Return $\bs\theta, \mbP$
\end{algorithmic}
\caption{Spectral Barycenter}
\label{alg: main algorithm}
\end{algorithm}

\paragraph{Update right eigenfunctions.}
We detail the \textit{UpdateRightEigenFunctions} step of \Cref{alg: main algorithm}. Let $\bs\lambda,\mbx,\bs\alpha$ and $\mbP$ be fixed; we aim to perform minimization steps of problem~\ref{eq: constrained spectral barycenter problem appendix} with regard to $\bs\beta$, the parameters controlling the right eigenfunctions. Each optimization step consists of a first-order gradient descent step followed by a projection of each eigenfunction on the RKHS unit sphere as described in \Cref{alg: Update right eigenfunctions}. 
\begin{algorithm}
   \begin{algorithmic}[1]
      \Require $\bs\theta = (\bs\lambda,\mbx,\bs\alpha,\bs\beta), \mbP, \widehat{\bs{T}}$
      \While{stopping criteria not met}
      \State $\widehat{\bs{\beta}} \gets $ Gradient descent step of $J(\bs\theta,\mbP;\widehat{\bs{T}})$ w.r.t $\bs\beta$.
      \For{$i \in [r]$}
      \Comment{$r$ being the number of eigenfunctions}
      \State $\bs\beta_i \gets \widehat{\bs{\beta}}_i/\sqrt{\widehat{\bs{\beta}}_i\mbK\widehat{\bs{\beta}}_i}$
      \Comment{Projection on the RKHS unit sphere}
      \EndFor
      \EndWhile
      \Return $\bs\beta$
   \end{algorithmic}
   \caption{UpdateRightEigenFunctions}
   \label{alg: Update right eigenfunctions}
\end{algorithm}

\paragraph{Update left eigenfunctions.}
We detail the \textit{UpdateLeftEigenFunctions} step of \Cref{alg: main algorithm}. Let $\bs\lambda,\mbx,\bs\beta$ and $\mbP$ be fixed; we aim to perform minimization steps of problem~\ref{eq: constrained spectral barycenter problem appendix} with regard to $\bs\alpha$, the parameters controlling the left eigenfunctions. Each optimization step consists of a first-order gradient descent step followed by a projection onto the manifold induced by the spectral decomposition constraint: $\bs\alpha^*\mbK\beta=\mbI$. Algorithm~\ref{alg: Update left eigenfunctions} describes the optimization procedure, and the next paragraph discusses the projection.

\begin{algorithm}
   \begin{algorithmic}[1]
      \Require $\bs\theta = (\bs\lambda,\mbx,\bs\alpha,\bs\beta), \mbP, \widehat{\bs{T}}$
      \While{stopping criteria not met}
      \State $\widehat{\bs{\alpha}} \gets $ Gradient descent step of $J(\bs\theta,\mbP;\widehat{\bs{T}})$ w.r.t $\bs\alpha$.
      \State $\bs\alpha \gets \widehat{\bs{\alpha}} - \bs\beta\left(\left(\widehat{\bs{\alpha}}^*\mbK\bs\beta - \mbI\right)\left(\bs\beta^*\mbK\bs\beta\right)^{-1}\right)^*$
      \Comment{Manifold Projection, see below.}
      \EndWhile
      \Return $\bs\alpha$
   \end{algorithmic}
   \caption{UpdateLeftEigenFunctions}
   \label{alg: Update left eigenfunctions}
\end{algorithm}

\paragraph{Projection step.}
Let $\widehat{\bs{\alpha}} \in \mdC^{n \times r}$ be the estimated parameters controlling the left eigenfunctions after a gradient descent step without constraints. Hence, these parameters might not verify the spectral decomposition constraint. We aim to identify the closest parameters, $proj(\widehat{\bs{\alpha}}) \in \mdC^{n \times r}$, for the RKHS metric and lying on the manifold induced by the spectral decomposition constraint. It leads to a constrained optimization problem:
\begin{equation}
   \begin{array}{rcrl}
     Proj(\widehat{\bs{\alpha}})&  \triangleq & \argmin\limits_{\bs\alpha} & \Tr\left((\bs\alpha-\widehat{\bs{\alpha}})^*\mbK(\bs\alpha-\widehat{\bs{\alpha}})\right) \\
     && \textrm{s.t.}& \bs\alpha^*\mbK\bs\beta = \mbI
   \end{array}~.
\end{equation}
Considering the real representation of the problem, it becomes a convex problem, and since $r \ll n$, it is strictly feasible. Strong duality holds by Slater's constraint qualification. As the optimization function $J$ and constraints are differentiable with respect to $\alpha$, the KKT conditions are necessary and sufficient conditions to characterize the optimum. The Lagrangian can be expressed as: 
\begin{equation}
   L(\bs\alpha,\bs\mu,\bs\nu) \triangleq \Tr\left((\bs\alpha-\widehat{\bs{\alpha}})^*\mbK(\bs\alpha-\widehat{\bs{\alpha}})\right) + \bs\mu^\intercal(\Rel(\bs\alpha^*\mbK\bs\beta)-\mbI) + \bs\nu^\intercal(\Img(\bs\alpha^*\mbK\bs\beta))~.
\end{equation}
Taking Wirtinger derivative notation, the optimal primal-dual variables verify: 
\begin{equation}
   \left\{\begin{array}{l}
      \nabla_{\overline{\bs\alpha}} L(\bs\alpha,\bs\mu,\bs\nu) = \mbK(\bs\alpha - \widehat{\bs{\alpha}} + \bs\beta(\bs\mu - i \bs\nu)^\intercal) = \mb0\\
      \nabla_{\bs\mu} L(\bs\alpha,\bs\mu,\bs\nu) = \Rel(\bs\alpha^*\mbK\bs\beta)-\mbI = \mb0 \\
      \nabla_{\bs\nu} L(\bs\alpha,\bs\mu,\bs\nu) =  \Img(\bs\alpha^*\mbK\bs\beta) = \mb0
   \end{array}\right.
\end{equation}
Regarless of the rank of $\mbK$, $\bs\alpha - \widehat{\bs{\alpha}} + \bs\beta(\bs\mu - i \bs\nu)^\intercal = \mb0$ always verifies the first optimality equation. It leads to the projector:
\begin{equation}
\label{eq: projection alpha}
   Proj(\widehat{\bs{\alpha}}) = \widehat{\bs{\alpha}} - \bs\beta\left(\left(\widehat{\bs{\alpha}}^*\mbK\bs\beta - \mbI\right)\left(\bs\beta^*\mbK\bs\beta\right)^{-1}\right)^*~.
\end{equation}

\subsection{Case of finite-dimensional RKHS}
\label{appendix: barycenter finite dimensional rkhs}
Consider $\mcH$ be finite $d$-dimensional RKHS with the orthonormal basis $\mbf = \{f_i\}_{i \in [d]}$. For instance, $\mcH$ is based on a functional dictionary as used in extended DMD \citep{kutz2016dynamic}. Let $\mcS_r(\mcH)$ be the set of non-defective compact operators acting on $\mcH$ with rank at most $r \leq d$. We aim to solve: 
\begin{equation}
   \argmin_{T \in \mcZ_r(\mcH)} \sum_{i \in [N]} \gamma_i d_\mcS(T,\widehat{T_i})^2~, 
\end{equation}
with $\bs\gamma \in \Delta^N$ the importance weights, $\{\widehat{T}_i \in \mcZ_r(\mcH) \ | \ i \in [N] \}$ estimated operators, and $d_\mcS$ defined in \Cref{thm:main_wasserstein_metric} given $\eta \in (0,1)$ and $p=2$. For any compact operator acting on $\mcH$ with rank at most $r$, there exists coefficients $\bs\lambda \in \mdC^r$, and control parameters of the functional basis $\bs\alpha,\bs\beta \in \mdC^{d \times r}$ such that: 
\begin{equation}
   T_\theta : h \in \mcH \mapsto \sum_{i \in [r]} \lambda_i \innerp{f_{\bs\alpha_i}}{h}_\mcH f_{\bs\beta_i} \in \mcH~, 
\end{equation}
where $f_{\bs\alpha_i} \triangleq \sum_{j \in d} \alpha_{ji}f_j$, $f_{\bs\beta_i} \triangleq \sum_{j \in d} \beta_{ji}f_j$, and $\theta \triangleq (\bs\lambda,\bs\alpha,\bs\beta)$. To ensure non-defectiveness of $T_\theta$ further constraints are imposed on the control parameters, which leads to the constrained optimization problem: 
\begin{equation}
   \label{eq: constrained spectral barycenter finite case}
   \begin{array}{rl}
      \argmin\limits_{\bs\lambda, \bs\alpha ,\bs\beta} & \sum_{i \in [N]}\gamma_i d_\mcS(T_\theta,\widehat{T_i})^2 \\ 
    \textrm{s.t.} & \bs\alpha^*\bs\beta = \mbI \\ 
   & \bs\beta_i^* \bs\beta_i = 1, \quad \forall i \in [r]
   \end{array}
\end{equation}
Note that this optimization problem is related to the infinite-dimensional problem defined \cref{eq: constrained spectral barycenter problem} by assuming the control points to be fixed such that the kernel matrix is the identity matrix, i.e. $\mbK=\mbI$. It follows that the optimization procedure described in the case of infinite-dimensional RKHS in \Cref{appendix: barycenter estimation} also handles the finite-dimensional case.

\section{Proofs of Statistic results}
\label{appendix: stat_bound}

We now prove the main statistical results in this section.

\prpstatbound*

\begin{proof}[Proof of Theorem \ref{thm:stat_bound}.]

Without loss of generality, we can assume that the operators eigenvalues are of multiplicity $1$. 
Then the discrete distribution representation of the operator $\HTO{k}$ provided in \eqref{eq: baseline metric} becomes
\begin{equation}
\mu(\HTO{k}) {\triangleq} \frac{1}{r_k} \textstyle{\sum_{j \in [r_k]} }\delta_{(\lambda_j(k),\mcV_j(k))}.
\end{equation}
Similarly
\begin{equation}
\mu(\ETO{k}) {\triangleq} \frac{1}{r_k} \textstyle{\sum_{j \in [r_k]} }\delta_{(\widehat{\lambda}_j(k),\widehat{\mcV}_j(k))}.
\end{equation}

\paragraph{Stability of the $d_\mcS$ metric.}~ Next by definition of $d_{\mcS}$ in \eqref{eq: spectral metric on operator} and the triangular inequality applied to the Wasserstein metric:
\begin{equation}
\label{eq:Wasserstein-perturbation}
\big| d_\mcS(\ETO{1},\ETO{2}) - d_\mcS(\HTO{1},\HTO{2})  \big| 
\;\le\; W_p\big(\mu(\HTO{1}), \mu(\ETO{1})\big) + W_p\big(\mu(\HTO{2}), \mu(\ETO{2})\big).
\end{equation}
We recall that $W_p\big(\mu(\HTO{k}), \mu(\ETO{k})\big)$ is defined as:
\begin{equation*}
\label{eq:discrete-Wasserstein0}
W_p\big(\mu(\HTO{k}), \mu(\ETO{k})\big) := 
\Bigg(
\min_{\mbP \in \Pi_{\rm uniform}(r_k)} 
\sum_{i=1}^{r_k} \sum_{j=1}^{r_k} c_{i,j}^p \, P_{i,j} 
\Bigg)^{1/p},
\end{equation*}
where the cost matrix $C_k = (c_{i,j})_{i,j\in [r_k]}$ is defined as
$$
c_{i,j}:= d_\eta((\lambda_i(k),\mcV_i(k)),(\widehat{\lambda}_j(k),\widehat{\mcV}_j(k)))\geq 0,\quad \forall i,j\in [r_k],
$$
and the set of uniform transport plans is:
\begin{equation*}
\Pi_{\rm uniform}(r_k) := 
\Big\{ \mbP \in \mathbb{R}_+^{r_k \times r_k} \ \Big|\ 
\mbP \mathbf{1} = \frac{1}{r_k}\mathbf{1}, \ \mbP^\intercal \mathbf{1} = \frac{1}{r_k}\mathbf{1} \Big\}.
\end{equation*}
Then we note that the transport plan $\pi_{i,i} = 1/r_k$ for any $i\in [r_k]$ and $\pi_{i,j} = 0$ for any $i\neq j$ belongs to the set $\Pi_{\rm uniform}(r_k)$. Consequently
\begin{equation}
\label{eq:discrete-Wasserstein}
W_p^p\big(\mu(\HTO{k}), \mu(\ETO{k})\big) \leq 
\sum_{i=1}^{r_k} \sum_{j=1}^{r_k} c_{i,j}^p \, \pi_{i,j}  = \frac{1}{r_k}\sum_{i=1}^{r_k} c_{i,i}^p
,
\end{equation}

In view of \eqref{eq:entriwise_error_cost_matrix2} below, we prove w.p.a.l. $1-\delta$ that 
$$
\max_{i\in [r_k]}\{ c_{i,i}\} \lesssim \varepsilon_n(\delta) :=  n^{-\frac{\rpar-1}{2(\rpar+\spar)}}\ln(2\delta^{-1}),\quad\forall k\in [2].
$$
Then we get with the same probability 
$$
W_p\big(\mu(\HTO{k}),\mu(\ETO{k})\big) \lesssim n^{-\frac{\rpar-1}{2(\rpar+\spar)}}\ln(2\delta^{-1}),\quad \forall k\in [2],
$$
and consequently we obtain the final bound on $\big| d_\mcS(\ETO{1},\ETO{2}) - d_\mcS(\HTO{1},\HTO{2})  \big|$ in view of \eqref{eq:Wasserstein-perturbation}.

\paragraph{Bounding the learning error $\|\HTO{} - \ETO{}\|$.}~ For brevity we set $\|\cdot\|$ for the operator norm $\|\cdot \|_{\RKHS \to \RKHS}$ and $\|\cdot\|_{\RKHS}$ for the Hilbert-Schmidt norm $\|\cdot \|_{\mathrm{HS}(\RKHS,\RKHS)}$. For any $k$, we introduce the population RRR and Ridge operators as 
$$
\HTO{k,\reg} :=(\Cx^k{+}\gamma I)^{-\frac{1}{2}}\SVDrr{(\Cx^k {+}\gamma I)^{-\frac{1}{2}}\Cxy^k}{r_k},\quad \HTO{k,\reg}^R = (\Cx^k+\reg)^{-1} \Cxy^k
.
$$
\\
Then we have the following Bias-Variance decomposition in operator norm:
\begin{align}
\label{eq:Bias_Variance_dec}
    \|\HTO{k} - \ETO{k}\| &\leq \underbrace{\|\HTO{k} - \HTO{k,\reg}\|}_{\text{$=:a_1$ ``Bias'' } } + \underbrace{\|\HTO{k,\reg} - \ETO{k}\|}_{\text{ $=:a_2$ ``Variance"} }.
\end{align}

\paragraph{Bias term $a_1$.}~We have
\begin{align}
\label{eq:Bias_dec1}
 a_1 \leq \Vert\HTO{k}-\HTO{k,\reg}^R\Vert + \Vert\HTO{k,\reg}^R - \HTO{k,\reg}\Vert.
\end{align}

Next since $P_{\leq r_k} \TO_{k}S_{\im_k} = S_{\im_k} %
\HTO{k}$, we get
$$
S^*_{\im_k} P_{\leq r_k}  \TO_{k} S_{\im_k} = \Cx^{k}\,\HTO{k},
$$
and
\begin{align*}
\HTO{k,\reg}^R&=(\Cx^{k}+\reg I)^{-1}\Cx^{k}\HTO{k} + (\Cx^{k}+\reg I)^{-1} S^*_{\im_k} P_{\leq r_k}^\bot\TO_{\im_k} S_{\im_k} \\
&= \HTO{k}-\reg(\Cx^{k}+\reg I)^{-1}(\Cx^{k})^{(\alpha-1)/2}(\Cx^{k})^{\dagger(\alpha-1)/2}\HTO{k} + (\Cx^{k}+\reg I)^{-1} S^*_{\im_k} P_{\leq r_k}^\bot\TO_{\im_k} S_{\im_k}.
\end{align*}

Therefore, 
\begin{align}
\label{eq:Bias_dec2}
\Vert\HTO{k}-\HTO{k,\reg}^R\Vert &\leq \reg\Vert(\Cx^{k}+\reg I)^{-1}(\Cx^{k})^{(\alpha-1)/2}\Vert\,\Vert[(\Cx^{k})^\dagger]^{1-\alpha}\HTO{k}\Vert + \frac{1}{\sqrt{\reg}}\Vert P_{\leq r_k}^\bot\TO_{\im_k}\Vert \sqrt{c_{\RKHS}} \notag\\
&\leq \reg^{(\alpha-1)/2}\Vert[(\Cx^{k})^\dagger]^{1-\alpha}\HTO{k}\Vert+\sqrt{\frac{c_{\RKHS}}{\reg}}\Vert P_{\leq r_k}^\bot\TO_{\im_k}\Vert 
\end{align}

We tackle now the second term in the right-hand side of \eqref{eq:Bias_dec1}. We note first that
\begin{align*}
    \HTO{k,\reg}^R - \HTO{k,\reg}&= (\Cx^{k}+\reg I)^{-1} \Cxy^{k} (I-\Pi_{r_k}).
\end{align*}
Hence we get
\begin{align}
\label{eq:Bias_dec3}
\Vert \HTO{k,\reg}^R - \HTO{k,\reg}\Vert\leq \frac{1}{\sqrt{\reg}}\sigma_{r_k+1}((\Cx^{k}+\reg I)^{-1/2} \Cxy^{k})\leq \frac{\sqrt{c_{\RKHS}}}{\sqrt{\reg}}\Vert P_{\leq r_k}^\bot\TO_{\im_k}\Vert.
\end{align}

Combining \eqref{eq:Bias_dec1}, \eqref{eq:Bias_dec2} and \eqref{eq:Bias_dec3} with the assumption $\Vert|[(\Cx^k)^\dagger]^{\frac{\alpha-1}{2}}\HTO{k}\Vert_{\RKHS{\to}\RKHS}{<}\infty$, we obtain the following control on the bias:
\begin{align}
\label{eq:Bias_dec4}
    a_1 &= \|\HTO{k} - \HTO{k,\reg}\| \leq \reg^{(\alpha-1)/2}\Vert[(\Cx^{k})^\dagger]^{1-\alpha}\HTO{k}\Vert+\frac{2\,\sqrt{c_{\RKHS}}}{\sqrt{\reg}}\Vert P_{\leq r_k}^\bot\TO_{\im_k}\Vert\notag\\
    &\lesssim \reg^{(\alpha-1)/2}+\frac{2\,\sqrt{c_{\RKHS}}}{\sqrt{\reg}}\Vert P_{\leq r_k}^\bot\TO_{\im_k}\Vert
\end{align}

\paragraph{Variance term $a_2$.}~ We note first 
\begin{align*}
    \HTO{k,\reg} - \ETO{k} &= (\Cx^k + \reg I)^{-1/2}(\Cx^k + \reg I)^{1/2}\big( \HTO{k,\reg} - \ETO{k}\big)
\end{align*}
Taking the operator norm, we get
\begin{align*}
    a_2 = \|\HTO{k,\reg} - \ETO{k}\| &\leq  \|\Cx^k + \reg I\|^{-1/2} \|
    (\Cx^k + \reg I)^{1/2}\big( \HTO{k,\reg} - \ETO{k}\big)\| \leq \frac{1}{\sqrt{\reg}} \|(\Cx^k + \reg I)^{1/2}\big( \HTO{k,\reg} - \ETO{k}\big)\|.
\end{align*}

Define $B_k:=(\Cx^k+\reg I)^{-1/2}\Cxy^{k}$. An analysis of the variance of the RRR estimation (see Sections D.3.4. and D.4 and more specifically Lemma 1 and the proof of Proposition 18 in \cite{kostic2023sharp}) gives for any $\delta \in (0,1)$ w.p.a.l. $1-\delta$ 
\begin{align}
   a_2&\leq \frac{1}{\sqrt{\reg}}\|(\Cx^k+\reg I)^{1/2}( \HTO{k,\reg} - \ETO{k})\|\notag\\
   &\lesssim \frac{1}{n^{1/2}\reg^{(\spar+1)/2}}\ln\delta^{-1} +  
    \frac{1}{\sqrt{\reg\,n}}\left(1+  \frac{\sigma_1(B_k)}{\sigma_{r_k}^2(B_k) -\sigma_{r+1}^2(B_k)} \right)\,\ln\delta^{-1}.
\end{align}
Combining the previous display with \eqref{eq:Bias_dec4}, we get w.p.a.l. $1-\delta$
\begin{align}
    \|\HTO{k} - \ETO{k}\| &\lesssim \reg^{(\alpha-1)/2}   + \frac{1}{n^{1/2}\reg^{(\spar+1)/2}}\ln\delta^{-1}\notag\\
    &\hspace{1cm} + \frac{2\,\sqrt{c_{\RKHS}}}{\sqrt{\reg}}\Vert P_{\leq r_k}^\bot\TO_{k}\Vert  +  
    \frac{1}{\sqrt{\reg\,n}}\left(1+  \frac{\sigma_1(B_k)}{\sigma_{r_k}^2(B_k) -\sigma_{r+1}^2(B_k)} \right)\,\ln\delta^{-1}.
\end{align}

Since we assumed that the spectrum of $\TO_{k}$ decreases exponentially fast to $0$, that is %
$\lambda_{r_k}{\lesssim}-\frac{\alpha\log n}{2(\alpha+\beta)}$, and assuming in addition that 
the gap $gap_{r_k}$ is bounded away from $0$. Then, for $\reg \in (0,1)$ small, the dominating terms in the above display are the first two terms and we propose to balance $\reg$ using only those two. Hence we get for $\reg \asymp n^{-\frac{1}{\rpar+\spar }}$ w.p.a.l. $1-\delta$
\begin{align}
\label{eq:opnorm_bound}
   \|\HTO{k} - \ETO{k}\| & \lesssim n^{-\frac{\rpar-1}{2(\rpar+\spar)}}\ln\delta^{-1}%
    .
\end{align}

\medskip

{\bf Perturbation bounds.}~For simplicity we assume here that the all the eigenvalues admit multiplicity $1$. By a standard Davis-Kahan perturbation argument, we get
\begin{align*}
    |\nu_i - \widehat\nu_i | &\leq \| \lefun_i \|   \|  \refun_i  \| \| \HTO{} - \ETO{} \|\\
    \|\lefun_i - \elefun_i \| &\leq \| \lefun_i \|   \|  \refun_i  \| \frac{\| \HTO{} - \ETO{}  \|}{\gap_i}\\
    \|\refun_i - \erefun_i \| &\leq \| \lefun_i \|   \|  \refun_i  \| \frac{\| \HTO{} - \ETO{}   \|}{\gap_i}
\end{align*}

\noindent
\textbf{Final Bound on the metric}. 
An union combining \eqref{eq:opnorm_bound} for any $k\in [N]$, we get w.p.a.l. $1-\delta$ that the condition in \eqref{eq:relative_error_assump} in Lemma \ref{prop:d_eta_metric} is satisfied with 
$$
\varepsilon_0=\varepsilon_1 = n^{-\frac{\rpar-1}{2(\rpar+\spar)}}\ln(N\delta^{-1}) = : \varepsilon_n(\delta).
$$

Proposition \ref{prop:d_eta_metric} guarantees w.p.a.l. $1-\delta$ that for any $k\in [N]$, the operators $\HTO{k},\ETO{k}$ with corresponding spectral decomposition $(\nu^{(k)}_i,P^{(k)}_i)$ and $(\widehat{\nu}^{(k)}_i,\widehat{P}^{(k)}_i)$: $\forall i\in [r_k]$,
\begin{align}
\label{eq:entriwise_error_cost_matrix2}
    \vert d_{\eta}\big((\nu^{(k)}_i,P^{(k)}_i),(
\widehat{\nu}^{(k)}_i,\widehat{P}^{(k)}_i)\big) \vert \leq 2 \sqrt{2} %
\frac{\| \lefun_i^{(k)} \|   \|  \refun_i^{(k)}  \|}{\gap^{(k)}_i \wedge |\lambda_i^{(k)}|}
\,\varepsilon_n(\delta),\quad \forall i\in [r_k],\;\forall k\in [N]. 
\end{align}

\end{proof}

\subsection{Auxiliary results}

We propose a control on the metric $d_\eta(\cdot,\cdot)$. For simplicity, we assume that all the eigenvalues of the Koopman transfer operators are of multiplicity $1$. 

\begin{prp}
    \label{prop:d_eta_metric}
Let $\varepsilon_0,\varepsilon_1 \in (0,1/2)$ be an absolute constant such that, for any $i\in [r]$,
\begin{equation}
\label{eq:relative_error_assump}
\frac{|\nu_i-\hat\nu_i|}{|\nu_i|} \leq \frac{\| \lefun_i \|   \|  \refun_i  \| }{|\nu_i|}\varepsilon_0 \quad\text{and}\quad  \| P_i - \widehat{P}_i  \| \leq  \frac{\| \lefun_i \|   \|  \refun_i  \|}{\gap_i} \varepsilon_1.
\end{equation}
Then we have for any $i\neq j \in [r]$
\begin{align*}
   & \vert d_\eta(\nu_i,P_i),(\nu_j, P_j))  - d_\eta(\widehat\nu_i,\widehat{P}_i),(\widehat\nu_j, \widehat{P}_j)) \vert \\
   &\hspace{2cm}\leq 2 \sqrt{2} \left( \left(\frac{\| \lefun_i \|   \|  \refun_i  \| }{|\nu_i|} \vee \frac{\| \lefun_j \|   \|  \refun_j  \| }{|\nu_j|}   \right)\varepsilon_0 +  \left(  \frac{\| \lefun_i \|   \|  \refun_i  \|}{\gap_i} \vee \frac{\| \lefun_j \|   \|  \refun_j  \|}{\gap_j} \right)  \, \varepsilon_1   \right) .
\end{align*}
Similarly for any $i\in [r]$
\begin{align*}
   &  d_\eta(\nu_i,P_i),(\widehat{\nu}_i, \widehat{P}_i)) \leq 2 \sqrt{2} \left(\frac{\| \lefun_i \|   \|  \refun_i  \| }{|\nu_i|} \varepsilon_0 +   \frac{\| \lefun_i \|   \|  \refun_i  \|}{\gap_i}  \, \varepsilon_1   \right) .
\end{align*}
\end{prp}

\begin{proof}[Proof of Proposition \ref{prop:d_eta_metric}]
The metric $d_\eta$ is a convex combination of two parts.

We focus on the distance between generator eigenvalues, which is the same as the polar distance $d_{val}$ between transfer operator eigenvalues. Similarly as above, we have by the triangular inequality 
\begin{align*}
  \big| d_{val}(\nu,\nu') - d_{val}(\widehat{\nu},\widehat{\nu}')\big|   \leq \|(\tau,\omega)-(\hat\tau,\hat\omega)\|_2 + \|(\tau',\omega')-(\hat\tau',\hat\omega')\|_2
\end{align*}

Using Lemma \ref{lem:complexnumberinpolarform}, we get that 
\begin{align}
    \|(\tau,\omega)-(\hat\tau,\hat\omega)\|_2^2 &\leq |\nu- \hat\nu|^2 + \arcsin\left( \frac{|\nu- \hat\nu|^2}{4 |\nu| |\hat\nu|}\right).
\end{align}
Assume the relative eigenvalue error is small:
\begin{equation}
\label{eq:relative_error_assump-bis}
\frac{|\nu-\hat\nu|}{|\nu|} \vee \frac{|\nu'-\hat\nu'|}{|\nu'|} \leq \varepsilon <\frac{1}{2}.
\end{equation}
Under \eqref{eq:relative_error_assump-bis} we have $|\hat\nu|\ge (1-\varepsilon)|\nu|$ and therefore
$$
u := \frac{|\nu-\hat\nu|^2}{4|\nu||\hat\nu|} \leq
\frac{\varepsilon^2}{4(1-\varepsilon)} < 1,
$$
so the argument of $\arcsin(\cdot)$ lies in $(0,1)$ as required. Moreover, since $\arcsin(x)$ is Lipschitz near $0$ and $\arcsin(x)\le (\pi/2) x$ for all $x\in[0,u]$, we get
$$
\arcsin\!\Big( \frac{|\nu- \hat\nu|^2}{4 |\nu| |\hat\nu|}\Big)
\le \frac{\pi}{2}\frac{|\nu- \hat\nu|^2}{4 |\nu| |\hat\nu|}.
$$

Hence
\begin{align*}
\|(\tau,\omega)-(\hat\tau,\hat\omega)\|_2^2
&\leq |\nu- \hat\nu|^2 \Big(1 + \frac{\pi}{8|\nu||\hat\nu|}\Big)\leq |\nu- \hat\nu|^2 \Big(1 + \frac{\pi}{8(1-\varepsilon)|\nu|^2}\Big),
\end{align*}
and therefore
\begin{equation}\label{eq:polar_norm_bound}
\|(\tau,\omega)-(\hat\tau,\hat\omega)\|_2
\le |\nu- \hat\nu|\sqrt{1 + \frac{\pi}{8(1-\varepsilon)|\nu|^2}}
\le \sqrt{2}\frac{|\nu- \hat\nu|}{|\nu|},
\end{equation}
since $|\nu|\leq 1$ for all transfer operator eigenvalues. 

Apply the same bound to $(\nu',\hat\nu')$ and combine with the first display to obtain, for each matched pair of eigenvalues,
$$
\big| d_{val}(\nu,\nu') - d_{val}(\widehat{\nu},\widehat{\nu}')\big|
\lesssim \Big(\frac{|\nu- \hat\nu|}{|\nu|} + \frac{|\nu'- \hat\nu'|}{|\nu'|}\Big).
$$
Now apply this inequality to every eigenvalue pairs $i\neq j\in[r]$. In view of \eqref{eq:relative_error_assump}, we get
\begin{equation}\label{eq:final_polar_rate}
\big| d_{val}(\nu_i,\nu_j) - d_{val}(\widehat{\nu}_i,\widehat{\nu}_j)\big|
\lesssim \left( \frac{\| \lefun_i \|   \|  \refun_i  \| }{|\nu_i|} + \frac{\| \lefun_j \|   \|  \refun_j  \| }{|\nu_j|} \right) \varepsilon_0, \quad \forall i\neq j\in [r],
\end{equation}

For the Grassmanian part, we have for any $i\neq j\in [r]$ 
\begin{align*}
\bigg \vert    \| P_i - P_j \|   -  \| \widehat{P}_i - \widehat{P}_j \|   \bigg \vert   &\leq \| P_i - \widehat{P}_i - (P_j - \widehat{P}_j) \|\leq 2\sqrt{2} \left(  \frac{\| \lefun_i \|   \|  \refun_i  \|}{\gap_i} \vee  \frac{\| \lefun_j \|   \|  \refun_j  \|}{\gap_j}  \right) \varepsilon_1.
\end{align*}
Combining the last two displays gives the first result. The second result follows from a similar and actually simpler argument.
\end{proof}

\begin{lemma}
\label{lem:complexnumberinpolarform}
Let \(z_1=r_1e^{i\theta_1}\) and \(z_2=r_2e^{i\theta_2}\) be complex numbers in polar form with \(r_1,r_2\ge 0\) and \(\theta_1,\theta_2\in [0,2\pi)\). Then
\[
|z_1-z_2|^2=(r_1-r_2)^2+2r_1r_2\bigl(1-\cos(\theta_1-\theta_2)\bigr)
= (r_1-r_2)^2 + 4r_1r_2\sin^2\!\Big(\frac{\theta_1-\theta_2}{2}\Big).
\]
\end{lemma}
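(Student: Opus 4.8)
The lemma is an elementary identity, so the plan is simply a direct computation exploiting the Hermitian inner product structure of $\mathbb{C}$. First I would expand the squared modulus through conjugation:
\[
|z_1-z_2|^2 = (z_1-z_2)\overline{(z_1-z_2)} = |z_1|^2 + |z_2|^2 - z_1\overline{z_2} - \overline{z_1}z_2 = |z_1|^2 + |z_2|^2 - 2\,\Re\!\left(z_1\overline{z_2}\right).
\]
Substituting the polar forms gives $|z_1|=r_1$, $|z_2|=r_2$, and $z_1\overline{z_2} = r_1 e^{i\theta_1}\,r_2 e^{-i\theta_2} = r_1 r_2\, e^{i(\theta_1-\theta_2)}$, so that $\Re(z_1\overline{z_2}) = r_1 r_2\cos(\theta_1-\theta_2)$. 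Plugging this in yields $|z_1-z_2|^2 = r_1^2 + r_2^2 - 2 r_1 r_2\cos(\theta_1-\theta_2)$.

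To recover the two displayed forms I would complete the square via $r_1^2 + r_2^2 = (r_1-r_2)^2 + 2 r_1 r_2$, which gives $|z_1-z_2|^2 = (r_1-r_2)^2 + 2 r_1 r_2\bigl(1-\cos(\theta_1-\theta_2)\bigr)$ — the first equality — and then the second follows immediately from the half-angle identity $1-\cos\phi = 2\sin^2(\phi/2)$ applied with $\phi = \theta_1-\theta_2$.

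There is no genuine obstacle here: the whole argument is two lines, and the only remark worth making is that the hypothesis $\theta_1,\theta_2\in[0,2\pi)$ is cosmetic, since both $\cos(\theta_1-\theta_2)$ and $\sin^2\!\big((\theta_1-\theta_2)/2\big)$ are invariant under shifting the angle difference by an integer multiple of $2\pi$; hence the identity holds for arbitrary real $\theta_1,\theta_2$, which is the generality in which it is used in the proof of Proposition~\ref{prop:d_eta_metric} (where it serves to bound the Euclidean distance between the polar coordinates $(\tau,\omega)$ of two transfer-operator eigenvalues in terms of $|\nu-\hat\nu|$).
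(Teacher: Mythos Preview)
Your proof is correct and essentially identical to the paper's: both expand $|z_1-z_2|^2$ via conjugation to reach $r_1^2+r_2^2-2r_1r_2\cos(\theta_1-\theta_2)$, then complete the square and apply the half-angle identity $1-\cos\phi=2\sin^2(\phi/2)$. The only cosmetic difference is that you write the cross term as $2\,\Re(z_1\overline{z_2})$ while the paper writes it as $r_1r_2(e^{i\phi}+e^{-i\phi})$ before invoking Euler's formula.
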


\begin{proof}
Write the difference and compute its squared modulus:
\[
|z_1-z_2|^2 = |r_1e^{i\theta_1}-r_2e^{i\theta_2}|^2
= (r_1e^{i\theta_1}-r_2e^{i\theta_2})(r_1e^{-i\theta_1}-r_2e^{-i\theta_2}).
\]
Expanding yields
\[
|z_1-z_2|^2 = r_1^2 + r_2^2 - r_1r_2\bigl(e^{i(\theta_1-\theta_2)}+e^{-i(\theta_1-\theta_2)}\bigr).
\]
Using \(e^{i\phi}+e^{-i\phi}=2\cos\phi\) with \(\phi=\theta_1-\theta_2\) gives
\[
|z_1-z_2|^2 = r_1^2 + r_2^2 - 2r_1r_2\cos(\theta_1-\theta_2).
\]
Rearrange the first two terms as a perfect square plus a correction:
\[
r_1^2 + r_2^2 - 2r_1r_2\cos\phi
= (r_1^2 + r_2^2 - 2r_1r_2) + 2r_1r_2(1-\cos\phi)
= (r_1-r_2)^2 + 2r_1r_2(1-\cos\phi).
\]
Finally, apply the trigonometric identity \(1-\cos x = 2\sin^2(x/2)\) to obtain
\[
2r_1r_2(1-\cos\phi) = 4r_1r_2\sin^2\!\Big(\frac{\phi}{2}\Big),
\]
which yields the claimed expression.
\end{proof}

\section{Comparison with other similarity measures}
\label{appendix: comparison_exp}

\subsection{Experiment protocol}
\paragraph{Simulated system and shifts.} We consider a referent linear oscillatory system that is the sum of two simple harmonic oscillators with frequencies 0.5Hz and 1.0Hz, respectively, with a noisy trajectory of length 4001 samples sampled at 200Hz, which is an additive Gaussian noise with standard deviation of $1e-2$. We compare the Koopman operator of the referent system with those of shifted
systems according to four scenarios: 
\begin{enumerate}[label=(\alph*)]
    \item \textbf{Frequency shift}, changes the 1Hz harmonic frequency from  0.6Hz to 2.5Hz in 39 evenly spaced frequencies.
    \item \textbf{Decay rate shift}, changes the 1Hz harmonic decay rate from -0.3 (diverging) to 3.0 (converging) in 67 evenly spaced rates.
    \item \textbf{Subspace shift (rank)} gradually transforms the 1Hz sine
    wave into a 1Hz square wave signal using a Fourier
    Decomposition of a square wave signal with increasing order up to 50. Series formulation of a square wave signal: $s(t) = \frac{4}{\pi} \sum_{n=0}^{\infty} \frac{1}{2n+1} \, \sin\!\big((2n+1)t\big)$.   
    \item \textbf{Sampling frequency shift} where the system is sampled at different sampling frequencies ranging from 100Hz to 300Hz instead of the reference 200Hz. Performed in 19 evenly spaced sampling frequencies.
\end{enumerate}
Koopman operators are estimated from sampled trajectories in each scenario with the RRR method~\citep{kostic2022learning}. We consider the linear kernel, the context (sliding window) is set to one second, the operators' rank is always fixed to twice the number of harmonic oscillators, and the Tikhonov regularization is set to $1e-8$.

\paragraph{Compared similarity measures.} We consider our proposed metric SGOT set with $\eta =0.5$. SOT, an OT-based similarity comparing eigenvalues~\citep{redman2024identifying}. GOT, an OT-based similarity comparing eigensubspaces with a Grassmannian metric and weighted by the normalized eigenvalues ~\citep{antonini2021geometry}. Note that compared to its theoretical definition, we extend the similarity to non-normal operators by taking the absolute value of eigenvalues. We also included the metrics induced by the Hilbert-Schmidt and Operator norms, and the Martin similarity~\citep{martin2002metric}, which compares poles of LDS transfer functions.

\subsection{Ablation study for parameter $\eta$ of SGOT }
\begin{figure}
    \centering
    \includegraphics[width=\linewidth]{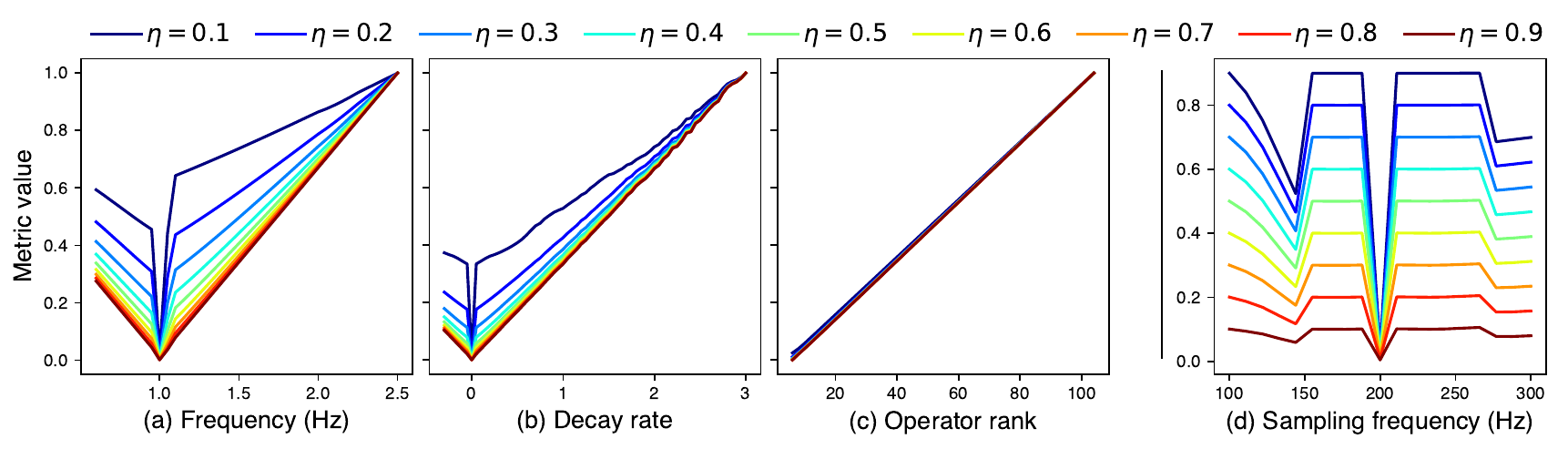}
    \caption{Influence of the $\eta$ parameter in SGOT under four scenarios of shifts of a linear oscillatory system: (a) frequency shift, (b) decay rate shift, (c) operator rank/subspace shift, (d) sampling frequency variation. In scenarios (a,b,c), metric values are normalized by their maximum.}
    \label{fig: ablation study eta}
\end{figure}
Following the same protocol presented in the previous paragraph, we compare our proposed metric SGOT with the parameter controlling the balance between eigenvalues and eigensubspaces $\eta$ ranging in $[0.1,0.2,\ldots,0.9]$. Results are presented in \Cref{fig: ablation study eta}. In scenarios (a,b,c) for any $\eta$, SGOT behaves piecewise linearly, where the ascent gets steeper for scenario (a,b) as $\eta$ decreases (eigensubspaces have more weights in the cost function). For scenario (c), SGOT behaves similarly for all $\eta$. Finally, SGOT becomes slightly more sensitive to the sampling frequency as $\eta$ decreases. In (d), the metric scale is not normalized, and the metric values remain relatively small.

\section{Machine learning on dynamical systems}
\label{appendix: ML experiment}
\subsection{Experimental protocol}
We evaluate similarity performances on a time series classification task. We selected 14 multivariate datasets from the UEA database \citep{ruiz2021great} whose main characteristics are described in \Cref{tab:datasets characteristics}. For each dataset, we estimate operators for individual time series of $n$ samples with the RRR method~\citep{kostic2022learning}, with the linear kernel, a Tikhonov regularization of $1e-2$, an arbitrary sampling frequency $f_{samp} \triangleq \min(100, (n/2)*0.2)$ and a context window $w_{len} \triangleq \min(50, n/2)$. Once all operators are estimated, we perform a 10-iteration Monte-Carlo cross-validation with a 0.7/0.3 train/test split without any preprocessing step. To perform classification, we consider K-Nearest Neighbors (K-NN) estimators defined with similarities: Hilbert-Schmidt, Operator, Martin~\citep{martin2002metric}, SOT~\citep{redman2024identifying}, GOT~\citep{antonini2021geometry}, and our metric SGOT. Note that the initialization-invariant Binet-Cauchy similarity has been excluded from this experiment as it relates to the Martin distance. At each cross-validation iteration, the number of neighbors (K) and the parameter $\eta$ for SGOT metric are set by grid search with a 5-fold cross-validation on the train set. K peaked between 1 and 10 and $\eta \in [0.01,0.1,0.5,0.9,0.99]$. Scores are evaluated in terms of accuracy, and a training time limit has been set to 5 hours per dataset/metric pair. The experiment has been seeded for reproducibility. 

\begin{table}[H]
    \centering
    \caption{Datasets main characteristics: \textit{Size}: number of time series, \textit{Channels}: number of dimensions per time series, \textit{Length}: time series length, \textit{Classes}: number of classes.}
    \label{tab:datasets characteristics}
    \resizebox{0.6\linewidth}{!}{
    \begin{tabular}{l|cccc}
        \toprule
         & \#Size & \#Channels & Length & \#Classes \\
        \midrule
        AtrialFibrillation & 30 & 2 & 640 & 3 \\
        BasicMotions & 80 & 6 & 100 & 4 \\
        Cricket & 180 & 6 & 1197 & 12 \\
        EigenWorms & 259 & 6 & 17984 & 5 \\
        Epilepsy & 275 & 3 & 206 & 4 \\
        ERing & 300 & 4 & 65 & 6 \\
        FingerMovements & 416 & 28 & 50 & 2 \\
        HandMovementDirection & 234 & 10 & 400 & 4 \\
        Handwriting & 1000 & 3 & 152 & 26 \\
        Heartbeat & 409 & 61 & 405 & 2 \\
        NATOPS & 360 & 24 & 51 & 6 \\
        SelfRegulationSCP1 & 561 & 6 & 896 & 2 \\
        StandWalkJump & 27 & 4 & 2500 & 3 \\
        UWaveGestureLibrary & 440 & 3 & 315 & 8 \\
        \bottomrule
    \end{tabular}
    }
\end{table}

\subsection{Additional results}
\label{appendix: ML additional results}

\paragraph{Classification accuracy table.}
In addition to scores comparison plots between our metric SGOT and competitive similarities in the main body (see \Cref{fig: classfication score plot}), \Cref{tab: classification results} provides mean and standard deviation of accuracy scores per dataset and metric computed over the 10 iterations. Our metric SGOT is the best performer on all datasets, followed by GOT, another OT-based metric that only refers to eigensubspaces in its cost function. Also, SOT, a third OT-based similarity comparing operator, only from eigenvalues, performs poorly. Incorporating eigenvalues and eigensubspaces within the cost function improves performance on numerous datasets. By being more conservative (see \Cref{fig: ablation study}), Hilbert-Schmidt and Operator underperform compared to SGOT. Note that the Operator norm times out on Heartbeat. Lastly, Martin distance performs poorly and fails on some datasets due to its ill-definedness in some settings.

\begin{table}[H]
    \centering
    \caption{Classification accuracy scores. Datasets on rows and similarities on columns. \textbf{Best} and \underline{second best} performers are highlighted. Accuracy scores are denoted: $<mean>\pm<std>$.}
    \label{tab: classification results}
    \resizebox{\linewidth}{!}{%
    \begin{tabular}{l|cccccc}
\toprule
 & Hilbert-Schmidt & Operator & Martin & SOT & GOT & SGOT \\
\midrule
AtrialFibrillation & 0.31 $\pm$ 0.07 & 0.32 $\pm$ 0.13 & 0.27 $\pm$ 0.09 & 0.24 $\pm$ 0.14 & \underline{0.4 $\pm$ 0.12} & \textbf{0.44 $\pm$ 0.13} \\
BasicMotions & 0.48 $\pm$ 0.15 & 0.51 $\pm$ 0.13 & 0.3 $\pm$ 0.06 & 0.35 $\pm$ 0.1 & \underline{0.8 $\pm$ 0.07} & \textbf{0.93 $\pm$ 0.05} \\
Cricket & 0.33 $\pm$ 0.05 & 0.28 $\pm$ 0.05 & 0.07 $\pm$ 0.03 & 0.11 $\pm$ 0.04 & \underline{0.63 $\pm$ 0.04} & \textbf{0.85 $\pm$ 0.05} \\
ERing & 0.79 $\pm$ 0.04 & 0.74 $\pm$ 0.05 & 0.15 $\pm$ 0.04 & 0.39 $\pm$ 0.04 & \underline{0.85 $\pm$ 0.01} & \textbf{0.87 $\pm$ 0.03} \\
EigenWorms & 0.6 $\pm$ 0.04 & 0.57 $\pm$ 0.04 & $\emptyset$ & 0.57 $\pm$ 0.06 & \underline{0.71 $\pm$ 0.04} & \textbf{0.88 $\pm$ 0.03} \\
Epilepsy & 0.46 $\pm$ 0.05 & 0.52 $\pm$ 0.06 & $\emptyset$ & 0.34 $\pm$ 0.04 & \underline{0.78 $\pm$ 0.04} & \textbf{0.93 $\pm$ 0.03} \\
FingerMovements & 0.51 $\pm$ 0.05 & \underline{0.54 $\pm$ 0.03} & $\emptyset$ & 0.51 $\pm$ 0.05 & 0.51 $\pm$ 0.05 & \textbf{0.57 $\pm$ 0.03} \\
HandMovementDirection & 0.23 $\pm$ 0.04 & 0.23 $\pm$ 0.03 & \underline{0.27 $\pm$ 0.04} & 0.21 $\pm$ 0.05 & 0.24 $\pm$ 0.05 & \textbf{0.29 $\pm$ 0.03} \\
Handwriting & 0.12 $\pm$ 0.02 & 0.12 $\pm$ 0.02 & 0.05 $\pm$ 0.01 & 0.05 $\pm$ 0.01 & \underline{0.21 $\pm$ 0.02} & \textbf{0.42 $\pm$ 0.02} \\
Heartbeat & 0.7 $\pm$ 0.04 & $\emptyset$ & \underline{0.71 $\pm$ 0.04} & 0.69 $\pm$ 0.02 & 0.7 $\pm$ 0.04 & \textbf{0.73 $\pm$ 0.04} \\
NATOPS & \underline{0.76 $\pm$ 0.04} & 0.73 $\pm$ 0.05 & 0.25 $\pm$ 0.04 & 0.41 $\pm$ 0.04 & 0.74 $\pm$ 0.04 & \textbf{0.77 $\pm$ 0.04} \\
SelfRegulationSCP1 & 0.57 $\pm$ 0.02 & 0.56 $\pm$ 0.03 & $\emptyset$ & \underline{0.57 $\pm$ 0.05} & 0.56 $\pm$ 0.03 & \textbf{0.61 $\pm$ 0.02} \\
StandWalkJump & \underline{0.5 $\pm$ 0.15} & 0.41 $\pm$ 0.13 & $\emptyset$ & 0.39 $\pm$ 0.13 & 0.3 $\pm$ 0.13 & \textbf{0.69 $\pm$ 0.16} \\
UWaveGestureLibrary & 0.24 $\pm$ 0.04 & 0.21 $\pm$ 0.05 & $\emptyset$ & 0.13 $\pm$ 0.02 & \underline{0.47 $\pm$ 0.03} & \textbf{0.64 $\pm$ 0.04} \\
\midrule
avg. rank (lower is better) & 3.11 $\pm$ 1.03 & 3.75 $\pm$ 1.13 & 5.17 $\pm$ 1.43 & 4.39 $\pm$ 1.23 & \underline{2.57 $\pm$ 1.18} & \textbf{1.27 $\pm$ 0.73} \\
\bottomrule
\end{tabular}
    }
\end{table}

\paragraph{Computation times.} 
During the classification experiment, we kept track of all metric computation time, which we average per metric in \Cref{tab: compute time}. Operator norm is the least efficient metric, followed by the Hilbert-Schmidt. The most efficient similarities are Martin and SOT; however, they performed poorly. SGOT and GOT are slightly less effective than Martin and SOT but much more efficient than Hilbert-Schmidt and Operator.

\begin{table}[H]
    \caption{Average computation time per similarity on all validation folds.}
    \label{tab: compute time}
    \centering
    \begin{tabular}{cccccc}
        \toprule
        Hilbert-Schmidt & Operator & Martin & SOT & GOT & SGOT \\
        \midrule
        4.96ms & 13.04ms & 0.02ms & 0.03ms & 0.14ms & 0.12ms \\
        \bottomrule
    \end{tabular}
\end{table}

\paragraph{Critical diagram difference.}
Considering results from all 10 iterations of the Monte Carlo cross-validation, we compute the critical diagram difference to statistically compare all metric performances based on rank. The diagram is depicted in \Cref{fig:critical diagram difference}. The test significance level is set to 0.05. We use Friedman's test to reject the null hypothesis (All metrics' performances are similar) and compute the critical differences using the Nemenyi post-hoc test. Results show that SGOT is the best performer and statistically different from the second performer (SGOT). 

\begin{figure}[H]
    \centering
    \includegraphics[width=0.8\linewidth]{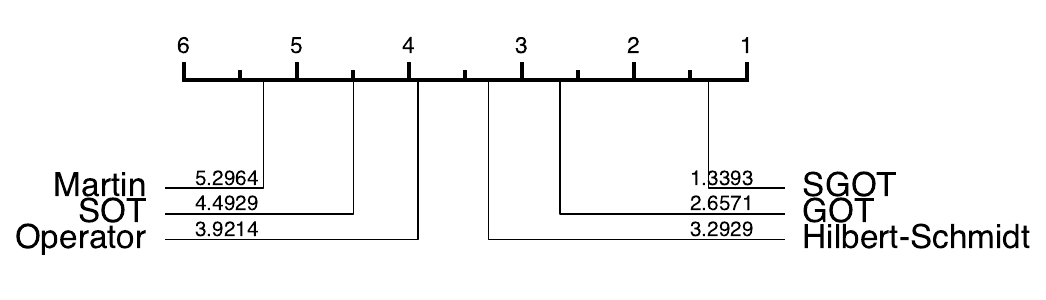}
    \caption{Critical diagram difference for comparing metrics' performances on a classification task. The classifiers are K-NN defined with the metrics: \textit{Hilbert-Schmidt, Operator, Martin, SOT, GOT, and SGOT (ours)}. Computed from the performance of all 10 iterations of the Monte Carlo cross-validation. The test significance level is set to 0.05, the null hypothesis is rejected with Friedman's test, and the critical differences are computed using Nemenyi post-hoc test.}
    \label{fig:critical diagram difference}
\end{figure}

\paragraph{2D T-SNE embeddings.}
We illustrate the dimensionality reductions capabilities of the different similarity measures. We selected 5 datasets from fields including human activity recognition, motion recognition, and biomedical applications. For the 5 selected datasets and all similarities, dataset samples are embedded as a 2D vector with the T-distributed Stochastic Neighbor Embedding (T-SNE) \cite{maaten2008visualizing} method fitted on the cross-distance matrix estimated with the similarities: Hilbert-Schmidt, Operator, Martin, SOT, GOT, and SGOT. \Cref{fig:all tsne} displays the embeddings for all 5 datasets and metrics. On the Eigenworms and Epilepsy datasets, Martin is ill-defined, and the similarity values cannot be computed. No clusters or classes can be identified for Hilbert-Schmidt, Operator, Martin, and SOT. Regarding other OT-based metrics, GOT better identifies classes; however, they do not form distinct clusters as obtained with our metric SGOT. 

\begin{figure}[H]
    \centering
    \includegraphics[width=1.0\linewidth]{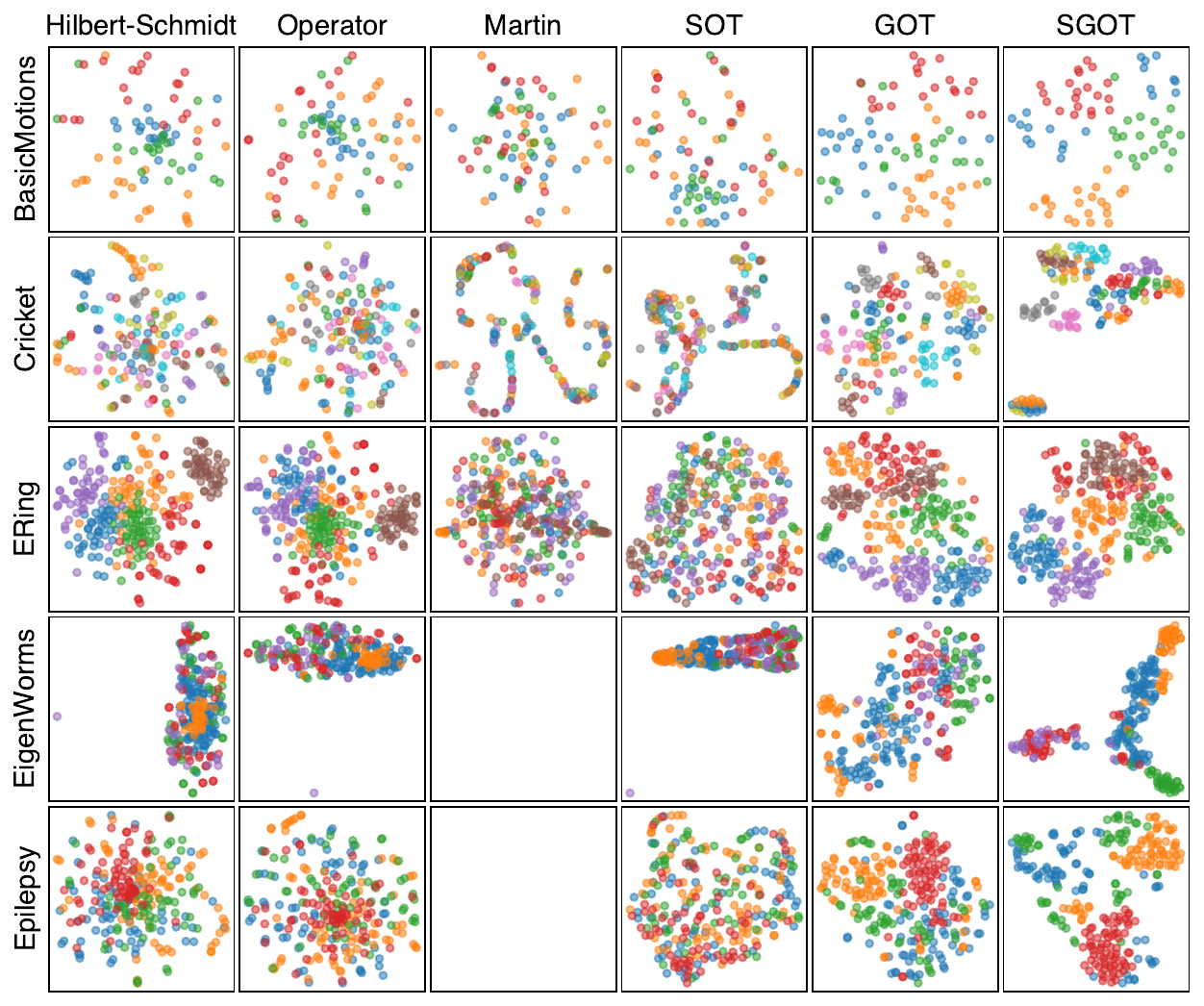}
    \caption{T-SNE 2D-embeddings of the classification datasets: \textit{BasicMotions, Cricket, Ering, EigenWorms, Epilepsy} based on similarities: \textit{Hilbert-Schmidt, Operator, Martin, SOT, GOT and SGOT (ours)}. Each point represents a dataset sample (a time series) whose color corresponds to its class. The Martin similarity is ill-defined on \textit{EigenWorms} and \textit{Epilepsy} datasets; thus, the corresponding T-SNEs are missing.  }
    \label{fig:all tsne}
\end{figure}

\section{Barycenter of dynamical systems}
\label{appendix: barycenter experiment}

\subsection{Interpolation between 1D dynamical systems}

\paragraph{Experimental settings.}
In this experiment, we compare the interpolation between dynamical systems through weighted Fréchet barycenters of their Koopman operators, estimated with a linear kernel, for different metrics. The two systems are linear oscillatory systems, each being the sum of two simple harmonic oscillators with different frequencies and decay rates, and additive Gaussian noise. The first system $\mbT^{(0)}$ combines a convergent low frequency oscillator ($\omega = 1.7\text{Hz},~\rho=-0.2$, amplitude=1.0) with a divergent high frequency oscillator ($\omega = 4.7\text{Hz},~\rho=0.2$, amplitude=0.2) . The second system $ \mbT^{(1)}$ is reversed; it combines a divergent low frequency oscillator ($\omega = 0.7\text{Hz},~ \rho=0.2$, amplitude=1)  with a convergent high frequency oscillator ($\omega = 11.3\text{Hz},~ \rho=-0.2$, amplitude=1). Both systems are noisy with a Gaussian noise with variance $\sigma^2 = 1e-4$. The systems Koopman operators are estimated with the RRR methods~\citep{kostic2022learning} from trajectories of length 5000 samples at 800Hz. RRR estimator is set to estimate a rank 4 operator with context window of 400 samples, a linear kernel, and Tikhonov regularization of $1e-8$. The interpolation is controlled by a ratio parameter $\gamma$ going from 0 to 1 in 0.1 steps. At each interpolation step, the weights in the Fréchet mean (see \eqref{eq:frechet_mean_problem}) are ($1-\gamma, \gamma$). We compare (a) the Hilbert-Schmidt metric without spectral decomposition constraints given by $\mbT_{bar} = (1-\gamma) \mbT^{(0)} + \gamma \mbT^{(1)}$, (b) the Hilbert-Schmidt metric with spectral decomposition constraints, and (c) our proposed metric SGOT. For (b) and (c), the barycentric operators are estimated with the proposed optimization scheme described in \cref{appendix: barycenter}. In both cases,  the initialization of the barycenter corresponds to the average of eigenvalues and eigenfunctions.
For the Hilbert-Schmidt (b), the barycenter optimizer is set with a $3e-5$ learning rate, a maximal number of iterations of 2000, with 1 gradient descent per coordinate at each iteration, the stopping criteria corresponds to a consecutive metric error lower than $1e-6$.
For the SGOT (c), $\eta = 0.9$, and the barycenter optimizer is set with a $1e-2$ learning rate, a maximal number of iterations of 200, with 10 gradient descent per coordinate at each iteration, the stopping criteria corresponds to a consecutive metric error lower than $1e-6$.
Finally, for displaying the predicted signals from the interpolated barycenter in \Cref{fig: interpolation}, all predictions started with the same initialization set, being the first 400 samples of a linear system that is the sum of 4 harmonic oscillators of the systems to interpolate. 

\paragraph{Additional results.}
For all constrained Hilbert-Schmidt (b) and SGOT (c) interpolated barycenters, we kept track of the decay rate and frequency of the two associated harmonic oscillators, the loss values, and the computation time. \Cref{fig:convergence} displays the normalized losses decrease per gradient descent step for each metric and interpolation step. The representation is in gradient descent step as the number of iterations and gradient descent step per cycle differ from metric to metric. In \Cref{fig: decay freq interpolation} we display the decays and frequencies of the interpolated barycenters. In particular, \Cref{fig:convergence} shows that the barycenter algorithm has converged for any metric and interpolation step. However, \Cref{fig: decay freq interpolation} shows that constrained Hilbert-barycenter (b) remains stuck in a local minima close to the initialization. In contrast, the SGOT barycenter perfectly (linearly) interpolates the decay and frequency between the source and target systems. Furthermore, the average computation time per gradient descent step is 13.11ms for the constrained Hilbert-Schmidt, while being 2.29ms for our metric SGOT, meaning that the barycenter algorithm is approximately 6x faster with the SGOT metric compared to the Hilbert-Schmidt.

\begin{figure}
    \centering
    \includegraphics[width=0.4\linewidth]{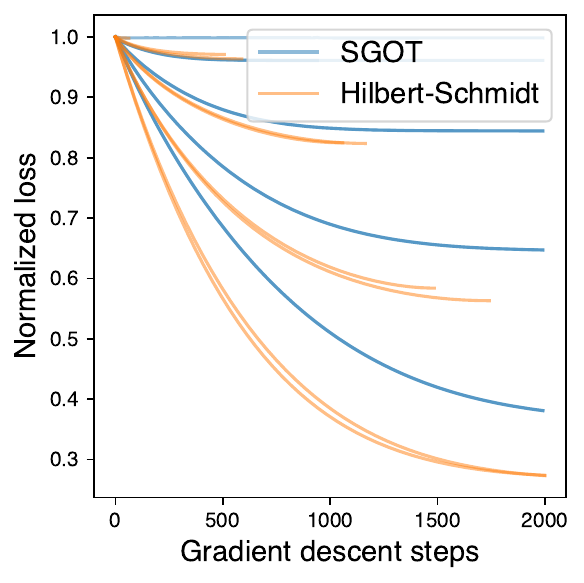}
    \caption{Normalized loss value per gradient descent step for the constrained Hilbert-Schmidt (b) and SGOT (c) barycenter for any interpolation step.}
    \label{fig:convergence}
\end{figure}

\begin{figure}
    \centering
    \includegraphics[width=0.7\linewidth]{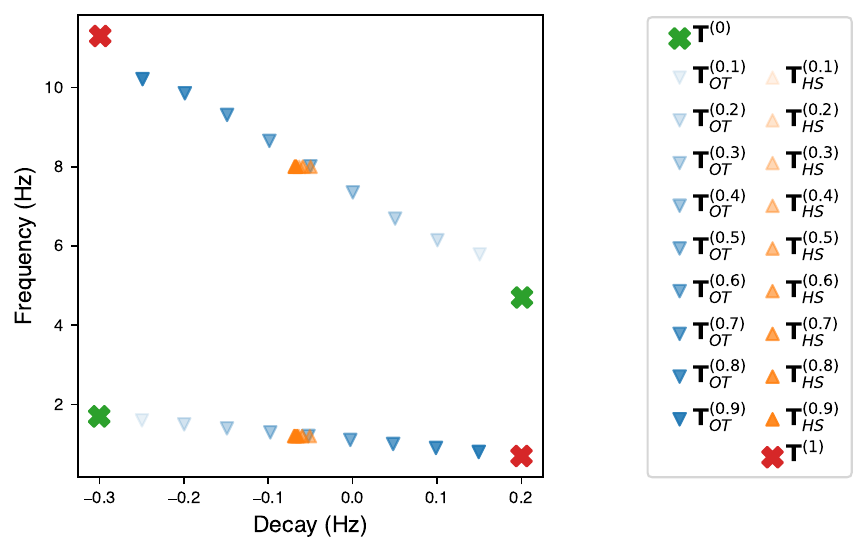}
    \caption{Decay rates and frequencies of the two harmonic oscillators associated with the interpolated barycenters for the constrained Hilbert-Schmidt (b) and the SGOT (c) metrics. The source system harmonic oscillator is in red, and the target system harmonic oscillator is in green.}
    \label{fig: decay freq interpolation}
\end{figure}

\subsection{Fluid dynamic interpolation}
\paragraph{Experimental settings.}
We aim to compute the barycenter of two fluid dynamics systems. To that end,
we consider the \textit{Flow past a bluff object}
dataset~\citep{tali2025flowbench}, which gathers trajectories of time-varying 2D
velocity and pressure fields of incompressible Navier-Stokes fluids flowing
around static objects. We select two trajectories, one with a cylinder object (Huggingface dataset file: harmonic/93) and the other with a triangular object (Huggingface dataset file: skeleton/48). For each trajectory, we only kept the
velocity field along the flow direction, leading to trajectories containing
242 samples of 1024x256 grids, which we down-sampled to grids with a 256x64
resolution. We estimate a Koopman operator with a linear kernel using the
RRR method from each trajectory sampled at 100Hz with a context window of 1, and a Tikhonov regularization of 1. The operators are restricted to the fourth leading
eigenvalues and eigenfunctions. We compute the SGOT barycenter with the optimization scheme described in \Cref{appendix: barycenter} with an initialization
being the average of eigenvalues and eigenfunctions. For the SGOT (c), $\eta = 0.01$, and the barycenter optimizer is set with a $1e-4$ learning rate, a maximal number of iterations of 100, with 10 gradient descent per coordinate at each iteration, the stopping criteria corresponds to a consecutive metric error lower than $1e-6$.

\end{document}